\documentclass{article}


\usepackage[utf8]{inputenc}

\usepackage{amsmath}
\usepackage{amsfonts}
\usepackage{amssymb}
\usepackage{bbm, bm}
\usepackage{amsthm}
\usepackage{color}
\usepackage{url}
\usepackage{graphicx,color,wrapfig}
\usepackage[ruled,vlined]{algorithm2e}

\usepackage{subfigure}

\newtheorem{theorem}{Theorem}[section]

\newtheorem{assumption}{Assumption}[section]

\title{Decentralized Bayesian Learning with Metropolis-Adjusted Hamiltonian Monte Carlo}
\author{Vyacheslav Kungurtsev\thanks{First and second authors contributed equally to the paper.} \\ Department of Computer Science \\ Czech Technical University in Prague \\ \texttt{vyacheslav.kungurtsev@fel.cvut.cz} \\ 
Adam Cobb$^*$ \\ SRI International \\ \texttt{adam.cobb@sri.com} \\ Tara Javidi \\ Electrical and Computer Engineering \\ University of California, San Diego \\ \texttt{tjavidi@ucsd.edu} \\ Brian Jalaian \\ DEVCOM Army Research Laboratory \\ \texttt{brian.a.jalaian.civ@mail.mil} }
\date{November 2020}

\begin{document}
\maketitle
\begin{abstract}
Federated learning performed by a decentralized networks of agents is becoming increasingly important with the prevalence of embedded software on autonomous devices. Bayesian approaches to learning benefit from offering more information as to the uncertainty of a random quantity, and Langevin and Hamiltonian methods are effective at realizing sampling from an uncertain distribution with large parameter dimensions. Such methods have only recently appeared in the decentralized setting, and either exclusively use stochastic gradient Langevin and Hamiltonian Monte Carlo approaches that require a diminishing stepsize to asymptotically sample from the posterior and are known in practice to characterize uncertainty less faithfully than constant step-size methods with a Metropolis adjustment, or assume strong convexity properties of the potential function. We present the first approach to incorporating constant stepsize Metropolis-adjusted HMC in the decentralized  sampling framework, show theoretical guarantees for consensus and probability distance to the posterior stationary distribution, and demonstrate their effectiveness numerically on standard real world problems, including decentralized learning of neural networks which is known to be highly non-convex.  
\end{abstract}

\section{Introduction}\label{s:intro}

 Bayesian learning and inference 
have received a lot of attention in the literature as a principled way to avoid over-fitting and/or to quantify uncertainties in an integrated fashion
by estimating the posterior distribution of the model parameters rather than a point-wise estimate. However, analytical solutions of exact posterior or
sampling from the exact posterior is often impossible due to the intractability of the evidence. Therefore, Bayesian inference often relies on sampling methods broadly known as Markov Chain Monte Carlo (MCMC). To
this effect, consider the following computation of a Bayesian posterior across a decentralized data set:

\begin{equation}\label{eq:prob}
\pi(\omega)= p(\omega|\{X,Y\}) = \prod\limits_{i=1}^m \frac{p(\{X_i,Y_i\}|\omega)}{p(X_i,Y_i)}\pi_0 (\omega), 
\end{equation}
where $i\in[m]$ indicates the index from a set of partitioned data, distributed across a set of agents connected on a communication graph. Each of the agents has access to data samples $\{X_i,Y_i\}$ that are in general expected to have different distributions, and it is expected that during the process of finding the posterior, the labels are distributed in a heterogeneous way across agents, however, we wish for them all to be able to make inference on all types of samples. 

When the likelihood function $p(\{X_i,Y_i\}|\omega)$ arises from M-estimation for an exponential family and the prior is conjugate, the resulting distribution of $\pi(\omega)$ is known to be of Gibbs form, 
$\pi(\omega) \propto {e^{-U(\omega)}}$, where $U$ is referred to as a potential function for $\omega$. 
This makes the process amenable to techniques arising in stochastic differential equations (SDEs), namely Langevin dynamics and Hamiltonian Monte Carlo, which implement a discretization of certain SDEs. The stationary distribution of these processes with a gradient term $\nabla U(\omega)$ as the drift term in the SDE is precisely the Gibbs distribution above. For high dimensional parameter vectors, these approaches have been shown theoretically and numerically to scale better. Ergodicity of these methods was established in~\cite{roberts1996exponential} and~\cite{durmus2017convergence}. For these class of methods, asymptotic convergence to the stationary distribution, regardless of its form, has been established, as well as geometric convergence in measure for when the target distribution $\pi(\omega) \propto {e^{-U(\omega)}}$ is strongly log-concave, i.e.\ $U$ is strongly convex and smooth. However, a Metropolis acceptance-rejection step is necessary to correct for the discretization bias. 

Recently, the so-called ``stochastic gradient" Langevin Monte Carlo (SGLD) and Hamiltonian Monte Carlo (SGHMC) approaches have become popular with the seminal work of~\cite{welling2011bayesian}. This was developed for ``big-data" contexts wherein it is preferable to subsample from the dataset at each iteration, making these methods resemble stochastic gradient descent. By incorporating a diminishing stepsize akin to stochastic approximation methods in the implementation of the discretization, the Metropolis adjustment becomes no longer necessary. Theoretically, although there is rich theory in the convergence properties of SGLD and SGHMC methods based on techniques of optimization in probability measure spaces (e.g.,~\cite{teh2016consistency, zhang2017hitting, chen2020stationary} and many others), guarantees exist primarily for log concave distributions (and in the most general case, a log Sobolev or Poincar{\' e} inequality), eschewing their applicability to, e.g., Bayesian Neural Networks. Empirically, it has been observed that HMC in general performs better than SGHMC in terms of characterizing the uncertainty, which can be seen visually as mapping the posterior more faithfully, and by considering performance metrics of statistical quality (log likelihood) over optimization (RMSE)~\cite{izmailov2021bayesian, cobb2020scaling}. An argument based on fundamentals of HMC as to why subsampling is expected to degrade the performance is given in~\cite{betancourt2015fundamental}. 

Thus the constant stepsize Metropolis adjusted Langevin and HMC methods, as compared to their diminishing stepsize stochastic gradient variants stand as complementary techniques with distinct comparative advantages and disadvantages. Namely, for handling the computational load of large datasets and obtaining accurate point estimates, the stochastic gradient variants are to be preferred, however, when statistical accuracy of the entire posterior is prioritized or the potential exhibits nonconvexities, the constant stepsize Metropolis adjusted variants are the more appropriate choice.

In this paper, we consider 
sampling in a decentralized context motivated by extensions of federated learning where the decentralized Bayesian extension~\cite{lalitha2019decentralized}--with a variational inference approach--are shown to provide further robustness. Two main challenges exhibited are 1) handling larger models and 2) forgetting previously learned models in the presence of heterogeneous data. Both are standard concerns~\cite{shoham2019overcoming}. By using a schema incorporating methods amenable to high dimension, we hope to address these difficulties. As such, our work is contextualized within a handful of studies focusing on SDE methods for decentralized sampling. In particular,~\cite{kungurtsev2020stochastic} provided convergence and consensus guarantees for decentralized Langevin dynamics when the posterior is strongly convex.
A diminishing stepsize is needed to ensure asymptotic consensus, placing this work conceptually within the stochastic gradient variants. 
Subsequent work extended this to to stochastic gradient Hamiltonian Dynamics in~\cite{gurbuzbalaban2020decentralized}, again for when $U$ is strongly convex. 
Extension for non-log-concave potentials, but satisfying a Sobolev inequality, was given in~\cite{parayil2020decentralized}. More recently, directed graphs are considered~\cite{kolesov2021decentralized}. The theoretical guarantees in these works take the optimization over measures approach to the analysis, and assumptions on the growth of the potentials (either strong convexity of the Sobolev inequality) are required, and in most cases a diminishing stepsize is used. Since a Metropolis adjustment step is not present in any of these methods, this is natural to be expected. 

In this paper, we complete the program of extending SDE based sampling methods to the decentralized context by presenting the first algorithm, together with guarantees, for implementing sampling from the constant stepsize Metropolis-adjusted framework on a decentralized platform. This presents two significant methodological challenges: achieving consensus without a diminishing stepsize and performing a Metropolis acceptance-rejection procedure in a decentralized way. By carefully incorporating techniques inspired from the decentralized optimization literature along with developing a novel method for approximating the Metropolis acceptance ratio, we present an algorithm that performs HMC for decentralized data. Conceptually we seek to achieve that each agent, asymptotically, performs HMC, however, with gradient evaluations on the entire data set through gradually diffusing information. Theoretically we prove a bound in expected $L2$ error between our algorithm and the chain as generated by the classical HMC kernel, quantitatively bounding the discrepancy in terms of the step-size and the total number of sampling iterations. Thus, modulo some adjustable level of error, our algorithm approximates the probabilistic convergence properties of HMC, as for instance derived in~\cite{durmus2017convergence}, which showed ergodicity for general potentials and geometric ergodicity for strongly convex potentials.

The salient technical features of our proposed solution, specifically, is of three folds. First, we integrate the concept of \emph{(gradient) tracking} as appearing in decentralized optimization~\cite{di2016next, pu2020distributed} for sampling, for the first time, and indicate its analogous benefit of allowing each agent to asymptotically perform, effectively, optimization on the same global function. This is in contrast to  a repeated ``zig-zag" pattern of local optimization and consensus pushing in disparate directions at each iterate, even more crucial for sampling since effectively each iterate is important, rather than some specific limit points.
Secondly, we obtain guarantees of asymptotic consensus by incorporating multiple mixing rounds in each iteration~\cite{berahas2018balancing}. Finally, we utilize a Metropolis adjustment step to correct for the discretization bias encountered in 
constant-step-size HMC. In order to compute an approximation to the full posterior in a decentralized context, we introduce a technique of tracking a decentralized second order Taylor approximation of the posterior, using only Hessian-vector products, and similarly \emph{tracking} to communicate and aggregate local information across the network.

\subsection{Background and Preliminaries}

The communication network of the agents is modeled as a fixed undirected graph $\mathcal{G}\triangleq (\mathcal{V},\mathcal{E})$ with 
vertices $\mathcal{V}\triangleq \{1,..,m\}$ and $\mathcal{E}\triangleq\{(i,j)|i,j\in\mathcal{V}\}$ representing the agents and
communication links, respectively. We assume that the graph $\mathcal{G}$ is strongly connected. We note by $\mathcal{N}_i$ the
neighbors of $i$, i.e., $\mathcal{N}_i = \{j:(i,j)\in\mathcal{E}\}$. We define the graph Laplacian matrix $\mathbf{L}=\mathbf{I}-\mathbf{W}$, where $\mathbf{W}=\mathbf{A}\otimes \mathbf{I}$ with
$\mathbf{A}$ satisfying $\mathbf{A}_{ij}\neq 0$ if $(i,j)\in\mathcal{E}$ and $\mathbf{A}_{ij}=0$ otherwise. We assume that $\mathbf{W}$ is double stochastic (and symmetric, since the graph is undirected). The eigenvalues of $\mathbf{L}$ are real and can be sorted in a 
nonincreasing order $1=\lambda_1(\mathbf{L})> \lambda_2(\mathbf{L})\ge ...\ge \lambda_n(\mathbf{L})\ge 0$. Defining, $\beta:= \lambda_2(\mathbf{L})$
we shall make the following assumption,
\begin{assumption}\label{as:beta}
It holds that,
\[
\beta<1
\]
\end{assumption}
We shall define $\bar{\beta}$ to be the smallest eigenvalue of $\mathbf{L}$ that is nonzero.

\section{Decentralized Metropolis-Adjusted Hamiltonian Monte Carlo}\label{s:alg}

We are now ready to introduce our approach, which we present formally in Algorithm~\ref{alg:mainalg}. As part of our approach we introduce two key concepts: \emph{gradient tracking}, and a \emph{Taylor approximation}. We will describe both components in the subsequent sections before describing how they combine in Algorithm~\ref{alg:mainalg}.

\subsection{Gradient Tracking}

Gradient tracking is a technique that enables every agent to obtain an estimate of the full gradient of the potential in a decentralized setting. While originally introduced and implemented for decentralized nonconvex optimization \cite{di2016next}, here we use gradient tracking to enable gradient-based sampling of a potential, whereby each agent only has access to its own portion of the data. In our method, each agent tracks both first and second order information of the global potential, which we denote as $\mathbf{g}^t_i$ and $\aleph^t_i$ respectively.

We index the individual agents by $i$ and the iteration counter by $t$. To perform gradient tracking, a weighted average of communicating neighbors is balanced at each iteration to accumulate the gradient. This weighted average is determined by the mixing matrix, $\bm{W}$, of the network. As a result, each agent computes its local gradient of the potential, $\nabla p(\bm{\omega}^t|X_i,Y_i)$, and updates its global estimation of the gradient using its current local gradient, its previous local gradient, and its current estimation of the global gradient. We display this update equation as follows:
\begin{equation}\label{eq:1track}
    \mathbf{g}_i^{t+1} = \mathbf{W} \left(\mathbf g_i^t+\nabla p(\bm{\omega}^t_i|X_i,Y_i)-\nabla p(\bm{\omega}^{(t-1)}_i|X_i,Y_i)\right).
\end{equation}

We intend that as the parameter estimates reach consensus (agreement across agents) we expect that $\bm{g}$ to also reach consensus and in effect, every agent samples HMC on the entire dataset.
Since in the context of sampling, each iteration is counted rather than just asymptotic limit points, we believe this should improve posterior characterization and avoid the characteristic zig-zagging of decentralized algorithms without gradient tracking.




\subsection{Taylor Approximated Metropolis Adjustment}

The second key component of our approach builds on gradient tracking by utilizing the tracked second order term, $\aleph^t_i$. One of the challenges in performing decentralized MCMC, is the requirement of a decentralized Metropolis-Hastings (MH) step. Since this involves evaluation of the posterior over the whole data set, it is unclear how this can be done with each agent only having access to its local data. Understandably, all previous approaches to decentralized sampling either use diminishing step-sizes~\cite{kungurtsev2020stochastic} or accept the level of bias according to the discretization error~\cite{gurbuzbalaban2020decentralized}.

In order to perform a decentralized MH step, we first introduce the Metropolis adjustment step, which takes the difference between the Hamiltonian at the current time step, $H(\bm{\omega}^t,\bm{p}^t)$ and the proposed Hamiltonian with new parameters, $H(\bm{\omega}^*,\bm{p}^*)$, and accepts the new parameters according to the acceptance ratio $\rho$, where
$\log \rho = \min \{0, - H(\bm{\omega}^*,\bm{p}^*) + H(\bm{\omega},\bm{p})\}$. In the decentralized setting each agent does not have access to the full Hamiltonian and therefore to overcome this problem, we can approximate the acceptance ratio using the tracked first and second order terms. As a result we perform a Taylor expansion to approximate our acceptance step. 

To derive the approximated acceptance ratio, we first take a Taylor expansion of the Hamiltonian at the proposed time step $\{\bm{\omega}^*,\bm{p}^*\}$ and define both $\Delta \bm{\omega} = \bm{\omega}^* - \bm{\omega}^t$ and $\Delta \mathbf{p} = \mathbf{p}^* - \mathbf{p}^t$. Then,
$$H(\bm{\omega}^*,\bm{p}^*) = H(\bm{\omega}^t, \bm{p}^t) + \Delta \bm{\omega} \frac{\partial H }{\partial \bm{\omega}} + \Delta \mathbf{p} \frac{\partial H }{\partial \mathbf{p}} + \frac 12 \Delta \bm{\omega} ^T \frac{\partial^2 H}{\partial \bm{\omega}^2} \Delta \bm{\omega} + \frac 12 \Delta \mathbf{p}^T \frac{\partial^2 H}{\partial \bm{p}^2} \Delta \mathbf{p},$$
where the first and second order derivatives are evaluated at $\{\bm{\omega}^t,\bm{p}^t\}$. As a result we can write the acceptance ratio as:
\begin{align}
   \log \rho = & \min \{0, - H(\bm{\omega}^*,\mathbf{p}^*) + H(\bm{\omega},\mathbf{p})\}\notag\\
    \log \rho \approx & \min \Bigg \{ 0, - \Delta \bm{\omega} \frac{\partial H }{\partial \bm{\omega}} - \Delta \mathbf{p} \frac{\partial H }{\partial \mathbf{p}} - \frac 12 \Delta \bm{\omega} ^T \frac{\partial^2 H}{\partial \bm{\omega}^2} \Delta \bm{\omega} - \frac 12 \Delta \mathbf{p}^T \frac{\partial^2 H}{\partial \bm{p}^2} \Delta \mathbf{p} \Bigg \}. \notag
\end{align}
We are performing numerical integration with a single first order Euler update step, which gives $\mathbf{p}^* = \mathbf{p}^t + \epsilon \mathbf{g}^t$, and  $\bm{\omega}^* = \bm{\omega}^t + \epsilon \mathbf{p}^*$. Therefore, $\Delta \bm{\omega} = \epsilon \mathbf{p}^*$ and  $\Delta \mathbf{p}^* = \epsilon \mathbf{g}^t$. As a result the first order terms cancel each other out  leaving the quadratic terms. The second order derivative with respect to the momentum is a vector of ones (assuming the mass matrix is the identity) and the second order derivative with respect to the parameters is the Hessian of the unnormalized potential, $\frac{\partial^2 U}{\partial \bm{\omega}^2}$. Rather than directly computing the full Hessian and using tracking in the same manner as Equation~\eqref{eq:1track}, we can track the quadratic term directly and take advantage of the speed-up associated with the vector-Hessian product. As a result, each agent tracks this second order term using the equation
$$
\aleph_i^{t+1} = \mathbf{W} \left(\mathbf \aleph_i^t + (\mathbf{p}_i^t)^T \frac{\partial^2 U_i}{\partial \bm{\omega}^2}(\bm{\omega}^t_i) \mathbf{p}_i^t-(\mathbf{p}_i^{t-1})^T \frac{\partial^2 U_i}{\partial \bm{\omega}^2}(\bm{\omega}^{t-1}_i) \mathbf{p}_i^{t-1}\right)
$$
which will converge asymptotically to the global value. Thus we can set our our Metropolis acceptance to
\[
\log \rho \approx \min \{ 0 , -\aleph_i-\epsilon^2 \|g_i\|^2\}
\]

\subsection{The Algorithm}

Algorithm \ref{alg:mainalg} describes the decentralized Metropolis-adjusted Langevin algorithm, which combines the gradient tracking of first and second order terms as well as the Taylor approximated Metropolis step. Each agent starts by sampling its own momentum and then computes its own local gradient and quadratic second order term. Then each agent updates its estimated global gradient and quadratic terms using gradient tracking. Finally, each agent then makes a single step in the augmented parameter space (including both the model parameters and the momentum) and accepts with probability according to the approximated Metropolis acceptance ratio. We note one additional feature that we introduce, 
namely, for both consensus and tracking computations, we perform $t$-quantity mixing steps, i.e., the number of mixing steps grows geometrically with the iterations. This technique was introduced in~\cite{berahas2018balancing} to ensure asymptotic consensus without annealing the step-size. Although we shall make use of this in the convergence analysis, in our implementation, we increase the mixing to computation ratio more gradually, with, e.g., a thousand iterations before an increase in the ratio from one-to-one to two-to-one, etc..

\begin{algorithm}[h!]
\SetAlgoLined
\KwResult{Samples from the posterior distribution $p(\omega\vert X, Y)$.}
 initialization: $\mathbf{W}, \{\mathcal{L}_i, \bm{\omega}_i^0, \mathbf{g}^0_i\}_{i=1}^N,  \epsilon, T $\;
 \For{$t$ in $1,\dots,T$}{
  \For{$i$ in $1,\dots,m$}{
  Sample agent-wise momentum\;
  $\mathbf{p}_i^t = \mathcal{N}(\mathbf{0}, \mathbf{M})$\;
  Compute local gradient\;
  $\mathbf{g}_{\text{local},i}^t = \nabla p(\bm{\omega}^{k}_i|\mathbf{X}_i,\mathbf{Y}_i)$\;
  Compute local Quadratic term\;
  $\aleph_{\text{local},i}^t = (\mathbf{p}_i^t)^T \frac{\partial^2 U_i}{\partial \bm{\omega}^2} \mathbf{p}_i^t \left(= - (\mathbf{p}_i^t)^T \frac{\partial^2 \mathcal{L}_i}{\partial \bm{\omega}^2} \mathbf{p}_i^t \right) $;  
  }
  \eIf{Gradient Tracking}{
   $\mathbf{g}_{i}^{t+1} = \sum_{j\in\mathcal{N}_i} \left(\mathbf{W}^t\right)_{ij}\left(\left[\mathbf{g}_{i}^{t}\right]+\mathbf{g}_{\text{local},i}^t-\mathbf{g}_{\text{local},i}^{t-1}\right)$\;
   $\aleph_{i}^{t+1} = \sum_{j\in\mathcal{N}_i} \left(\mathbf{W}^t\right)_{ij}\left(\left[\aleph_{i}^{t}\right]+\aleph_{\text{local},i}^t-\aleph_{\text{local},i}^{t-1}\right)$\;
   }{
   $\mathbf{g}_{i}^{t+1} = \mathbf{g}_{\text{local},i}^t$\;
   $\aleph_{i}^{t+1} = \aleph_{\text{local},i}^t$\;
  }
 \For{$i$ in $1,\dots,m$}{
    $\bm{\omega}_i^*, \mathbf{p}_i^* = \text{EulerUpdate}(\bm{\omega}_i^t, \mathbf{p}_i^t, \epsilon, \mathbf{g}_{i}^{t+1})$\;
    Metropolis Hastings Step\;
    $\Delta H = - 0.5 \epsilon^2 \left( \aleph_{i}^{t+1} + (\mathbf{g}_{i}^{t+1})^T \mathbf{g}_{i}^{t+1}\right)$\;
    $\rho = \min(0,  \Delta H)$\;
    $u \sim \mathcal{U}(0,1)$\;
    \eIf{$\rho \geq \log u $}{
    $\bm{\omega}_i^{t+1}, \mathbf{p}_i^{t+1} = \bm{\omega}_i^*, \mathbf{p}_i^*$\;
    }{
    $\bm{\omega}_i^{t+1}, \mathbf{p}_i^{t+1} = \bm{\omega}_i^t, \mathbf{p}_i^t$\;
    }
    
 }
 \If{Consensus Step}{
    $\bm{\omega}_i^{t+1} = \sum_{j\in\mathcal{N}_i} \left(\mathbf{W}^t\right)_{ij} \bm{\omega}_i^{t+1}$
 }
 
 }
 
 \caption{Decentralized Metropolis-adjusted Langevin algorithm (D-MALA)}\label{alg:mainalg}
\end{algorithm}

\section{Convergence Analysis}\label{s:conv}
To analyze the convergence of Algorithm~\ref{alg:mainalg}, 
we first write the vectorized expression for the iterate sequence generated by the Algorithm as follows. We shall denote $\bm{\omega}^t = \begin{pmatrix} \bm{\omega}_1^T & ... & \bm{\omega}^T_m \end{pmatrix}^T$ as the set of parameter vectors $\{\bm{\omega}_i\}$ stacked together, and similarly for $\bm{p}^t$, $\aleph^t$ and $\bm{g}^t$.  
\begin{equation}\label{eq:hmcvecstackctaalg}
\begin{array}{l}
G(\bm{\omega}^t) = \begin{pmatrix} \nabla p(\bm{\omega}^t_1|X_1,Y_1) & \nabla p(\bm{\omega}^t_2|X_2,Y_2) & ... & \nabla p(\bm{\omega}^t_m|X_m,Y_m) \end{pmatrix}^T \\
H(\bm{\omega}^t) = \begin{pmatrix} (\mathbf{p}_1^t)^T \frac{\partial^2 U_1}{\partial \bm{\omega}^2}(\bm{\omega}^t_1) \mathbf{p}_1^t & (\mathbf{p}_2^t)^T \frac{\partial^2 U_2}{\partial \bm{\omega}^2}(\bm{\omega}^t_2) \mathbf{p}_2^t & ... & (\mathbf{p}_m^t)^T \frac{\partial^2 U_m}{\partial \bm{\omega}^2}(\bm{\omega}^t_m) \mathbf{p}_m^t \end{pmatrix}^T \\
\mathbf{g}^{t+1} = \mathbf{W}^t \left(\mathbf g^t+G(\bm\omega^t)-G(\bm\omega^{t-1})\right) \\
\aleph^{t+1} = \mathbf{W}^t \left(\mathbf \aleph^t+H(\bm\omega^t)-H(\bm\omega^{t-1})\right)
\\
\bm\omega^{t+1} = \mathbf{W}^t\left(\mathcal{M}(\bm\omega^t+\epsilon(\mathbf{p}^t+\epsilon \bm g^{t+1}),\bm\omega^t,\aleph^{t+1},u^t)\right) \\
\end{array}
\end{equation}
where recall that $\mathbf{p}^t$ be a normal random variable and $u^t$ is uniformly distributed and by $\mathcal{M}$ we denote the approximate metropolis acceptance operator, defined to be,
\[
\mathcal{M}(\bm{\omega},\bm{\omega}',\aleph,u)_i = \left\{ 
\begin{array}{lr}
\bm{\omega}_i & \text{if }\rho(\aleph_i,\bm{g}_i,\bm{\omega}_i)\ge \log u\\
\bm{\omega}'_i & \text{otherwise}
\end{array}
\right.
\]

We prepare a coupling argument for the convergence (see, for example, e.g.,~\cite{durmus2019high}). We bound the distance in probability measure of the the chain generated by~\eqref{eq:hmcvecstackctaalg} to the chain generated by $m$ parallel centralized HMC chains all with access to the entire dataset, whose ergodicity properties are well established, for instance in~\cite{durmus2017convergence}. We show this by coupling~\eqref{eq:hmcvecstackctaalg} to centralized HMC, using sets of intermediate quantities. As such we consider the three chains with the exogenous random variables, $\mathbf{p}^t$ and $u^t$ as the same across all of the intermediate constructed chains for all $t$.

We now indicate $\bar{a}$ to indicate the averaged and copied $m$ times vector $\{a_i\}$, e.g., 
\[
\bar{\bm{\omega}}^t = \begin{pmatrix} \frac{1}{m}\sum_i \left(\bm{\omega}^t_i\right)^T
& \frac{1}{m}\sum_i \left(\bm{\omega}^t_i\right)^T & ... & \frac{1}{m}\sum_i \left(\bm{\omega}^t_i\right)^T \end{pmatrix}^T
\]
and similarly for $\bm g^t$ and $\bar\aleph^t$. The vector recursion for the averaged iterate satisfies,
\begin{equation}\label{eq:hmcvecstackavg}
\begin{array}{l}
\bar{\mathbf{g}}^{t+1} = \frac{1}{m}(\bm{I}\otimes\bm{1} \bm{1}^T)\left(\bar{\bm{g}}^t+G(\bm{\omega}^t)-G(\bm{\omega}^{t-1})\right) \\
\bar{\mathbf{\aleph}}^{t+1} = \frac{1}{m}(\bm{I}\otimes\bm{1} \bm{1}^T)\left(\bar \aleph^t+H( \bm{\omega}^t)-H(\bm{\omega}^{t-1})\right) \\
\bar{\bm{\omega}}^{t+1} =\frac{1}{m}(\bm{I}\otimes\bm{1} \bm{1}^T) \left(\mathcal{M}(\bm{\omega}^t+\epsilon(\bm{p}^t+\epsilon \bm{g}^{t+1}),\bm{\omega}^t,\aleph^{t+1},u^t)\right) \\
\end{array}
\end{equation}

Now we consider a hypothetical chain wherein there is a stack of $m$ vectors undergoing the decentralized HMC iteration, and subsequently averaged and dispersed across the stack of $m$.
In effect this is the chain wherein the evaluations are performed at the average parameter, and corresponds to exact HMC, however, with the approximate Metropolis operator using the second order approximation.
\begin{equation}\label{eq:hmcvecstackavgpure}
\begin{array}{l}
\tilde{\mathbf{g}}^{t+1} = \frac{1}{m}(\bm{I}\otimes\bm{1} \bm{1}^T)\left(\tilde{\bm{g}}^t+G(\tilde{ \bm{\omega}}^t)-G(\tilde{\bm{\omega}}^{t-1})\right) \\
\bar{\mathbf{\aleph}}^{t+1} = \frac{1}{m}(\bm{I}\otimes\bm{1} \bm{1}^T)\left(\tilde \aleph^t+H(\tilde{ \bm{\omega}}^t)-H(\tilde{\bm{\omega}}^{t-1})\right) \\
\tilde{\bm{\omega}}^{t+1} =\frac{1}{m}(\bm{I}\otimes\bm{1} \bm{1}^T) \left(\mathcal{M}(\tilde{\bm{\omega}}^t+\epsilon(\bm{p}^t+\epsilon \tilde{\bm{g}}^{t+1}),\tilde{\bm{\omega}}^t,\tilde\aleph^{t+1},u^t)\right) \\
\end{array}
\end{equation}
Finally, we write that the centralized HMC iteration as, noting that technically it is a stack of $m$ copies of centralized HMC runs,
for formal simplicity averaged together (in effect reducing the variance),
\begin{equation}\label{eq:centralhmc}
    \hat{\bm{\omega}}^{t+1} = \frac{1}{m}(\bm{I}\otimes\bm{1} \bm{1}^T)\hat{\mathcal{M}}(\hat{\bm{\omega}}^t+\epsilon(\bm{p}^t+\epsilon \bar{G}(\hat{\bm{\omega}}^t)),\hat{\bm{\omega}}^t,u^t)
\end{equation}
where now the Metropolis operator $\hat{\mathcal{M}}$ uses the exact values of the pdf $p(\bm{\omega}^t)$ to calculate acceptance and rejection as opposed to an approximation associated with $\aleph^t$, and $\bar{G}$ is defined as,
\[
\bar{G}(\bm{\omega}) = \begin{pmatrix} \sum_i \nabla_{\bm{\omega}} p(\bm{\omega}|X_i,Y_i)^T &
\sum_i \nabla_{\bm{\omega}} p(\bm{\omega}|X_i,Y_i)^T & ... & \sum_i \nabla_{\bm{\omega}} p(\bm{\omega}|X_i,Y_i)^T \end{pmatrix}^T
\]

We make the following assumption on the potential function $p(\bm{\omega})$.
\begin{assumption}
It holds that $p(\bm{\omega})$ as a function of $\bm{\omega}$ is two-times Lipschitz continuously differentiable with constants $L_2=\sup_{\bm{\omega}}\|\nabla^2_{\bm{\omega}\bm{\omega}} p(\bm{\omega})\|$ and $L_3=\sup_{\bm{\omega}}\|\nabla^3_{\bm{\omega}\bm{\omega}\bm{\omega}} p(\bm{\omega})\|$.
\end{assumption}

The bound between the approximate and exact HMC is stated first, with the error accumulating from the Taylor remainder error of the Metropolis acceptance step. Note that the scale of the error is of order $O(\epsilon^{5})$ in the step-size.
\begin{theorem}\label{th:approxexact}
The $L2$ distance between $\tilde{\bm{\omega}}^t$ and $\hat{\bm{\omega}}^t$ is bounded up to iteration $t$ by the following expression,
\begin{equation}
    \mathbb{E}\left\|\tilde{\bm{\omega}}^{t+1}-\hat{\bm{\omega}}^{t+1}\right\|^2 \le \sum\limits_{s=0}^t \left[1+(L^2_2 \epsilon^4+1)\left(\frac{4\epsilon^3\mathbb{E}\|\bm{p}\|^3 L_3}{3}\right)\right]^s 
    \left(\frac{4\epsilon^5\mathbb{E}\|\bm{p}\|^3 L_3}{3}\right)\left[\mathbb{E}\|\bm{p}^t\|^2+\epsilon^2 U\right]
\end{equation}
where $U$ is the expected gradient norm bound on centralized HMC applied to the problem, which exists due to ergodicity.
\end{theorem}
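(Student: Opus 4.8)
The plan is to run a one-step coupling between the two chains and unroll the resulting affine recursion. The key observation is that $\tilde{\bm{\omega}}^t$ and $\hat{\bm{\omega}}^t$ are driven by the \emph{same} exogenous randomness $\bm{p}^t,u^t$ and use the \emph{same} full-gradient Euler proposal; they differ only in the acceptance operator, with $\mathcal{M}$ using the second-order Taylor surrogate $\min\{0,-\tfrac12\epsilon^2(\aleph+\epsilon^2\|\bm{g}\|^2)\}$ and $\hat{\mathcal{M}}$ using the exact $-\Delta H$. Writing $e^t:=\tilde{\bm{\omega}}^t-\hat{\bm{\omega}}^t$ and noting that the averaging operator $\tfrac1m(\bm{I}\otimes\bm{1}\bm{1}^T)$ is a projection and hence non-expansive, I would reduce $\mathbb{E}\|e^{t+1}\|^2$ to the pre-averaging squared difference of the two Metropolis outputs, and aim to establish a recursion of the form $\mathbb{E}\|e^{t+1}\|^2\le A\,\mathbb{E}\|e^t\|^2+B$ with $e^0=0$, so that $\mathbb{E}\|e^{t+1}\|^2\le B\sum_{s=0}^t A^s$, matching the stated geometric sum once $A$ is identified with the bracketed factor and $B$ with the $O(\epsilon^5)$ prefactor.

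For the one-step bound I would insert the exact operator applied to the tilde-chain's own arguments and split $\mathcal{M}(\text{tilde-args})-\hat{\mathcal{M}}(\text{hat-args})$ into $[\mathcal{M}-\hat{\mathcal{M}}]$ evaluated at the common tilde-arguments, plus $\hat{\mathcal{M}}$ evaluated at tilde- versus hat-arguments. The first bracket isolates the pure rule-discrepancy: the two outputs agree unless the decisions on the shared $u^t$ disagree, in which case they differ by the full displacement $\bm{\omega}^*-\bm{\omega}^t=\epsilon\bm{p}^*$. Since both chains threshold the same $u^t$ and $x\mapsto\min\{1,e^x\}$ is non-expansive, the disagreement probability (over $u^t$) is at most the gap between the exact and Taylor-approximated log-acceptance ratios. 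The second bracket transports the existing difference $e^t$ through the common exact dynamics: the proposals differ by $e^t+\epsilon^2(\nabla U(\tilde{\bm{\omega}}^t)-\nabla U(\hat{\bm{\omega}}^t))$, which the $L_2$-Lipschitz gradient bounds by $(1+\epsilon^2 L_2)\|e^t\|$, and this supplies the multiplicative growth factor $A$.

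The log-acceptance gap in the first bracket is exactly the third-order Taylor remainder of $H$ at the proposal, which by the smoothness assumption is bounded by $\tfrac{L_3}{6}\|\Delta\bm{\omega}\|^3=\tfrac{L_3}{6}\epsilon^3\|\bm{p}^*\|^3$. Multiplying this disagreement probability by the squared displacement $\|\bm{\omega}^*-\bm{\omega}^t\|^2=\epsilon^2\|\bm{p}^*\|^2\le 2\epsilon^2\|\bm{p}^t\|^2+2\epsilon^4\|\bm{g}\|^2$, taking expectations, invoking the Gaussian moment bounds on $\bm{p}$ and the ergodic gradient bound $\mathbb{E}\|\bm{g}\|^2\le U$, yields the per-step error $B=\tfrac{4\epsilon^5\mathbb{E}\|\bm{p}\|^3 L_3}{3}\,[\mathbb{E}\|\bm{p}^t\|^2+\epsilon^2 U]$. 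The same remainder estimate, now acting on the transported difference from the second bracket, contributes the $(L_2^2\epsilon^4+1)\tfrac{4\epsilon^3\mathbb{E}\|\bm{p}\|^3 L_3}{3}$ correction to $A$; unrolling the affine recursion from $e^0=0$ then produces the claimed expression.

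The main obstacle is the second bracket: the exact Metropolis operator is a hard indicator of $u^t$, so it is not Lipschitz in the state, and the bound on how $e^t$ propagates must come from integrating out $u^t$ and showing that the measure of the disagreement set, combined with the proposal-displacement estimate, collapses into the single factor $(L_2^2\epsilon^4+1)$ multiplying the remainder rather than a bare additive $\epsilon^2 L_2$ term. Correctly assembling this factor, and separating the coupled momentum moments $\mathbb{E}\|\bm{p}\|^3$ and $\mathbb{E}\|\bm{p}^t\|^2$ through the appropriate moment inequality, is where the delicate bookkeeping lies; the remaining steps are routine applications of Lipschitz continuity and Gaussian moment estimates.
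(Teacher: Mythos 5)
Your proposal follows essentially the same route as the paper: couple the two chains through the shared randomness $\bm{p}^t,u^t$, observe that the only source of discrepancy is the acceptance rule, bound the probability of a decision mismatch by the third-order Taylor remainder $\tfrac{\epsilon^3 L_3\|\bm{p}\|^3}{6}$, weight it by the squared displacement $\epsilon^2\|\bm{p}^t+\epsilon\bar{G}\|^2$ using the $L_2$-Lipschitz gradient and the ergodic bound $U$, and unroll the resulting affine recursion $\mathbb{E}\|e^{t+1}\|^2\le A\,\mathbb{E}\|e^t\|^2+B$ from $e^0=0$. The ``main obstacle'' you flag --- how the difference $e^t$ transported through the common-accept branch collapses into the $(L_2^2\epsilon^4+1)$ correction rather than contributing a bare $(1+\epsilon^2 L_2)^2$ multiplier --- is passed over just as quickly in the paper's own derivation, so you have identified the genuinely delicate point rather than missed one.
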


Next we indicate the \emph{consensus error}, the standard measure for the discrepancy between the parameter estimates across the agents. The consensus error \emph{does not} accumulate, but rather stays upper bounded, with the mean-squared-error scaling with $O(\epsilon^2)$.

\begin{theorem}\label{th:consensus}
The consensus error satisfies the following $L2$ expectation bound:
\begin{equation}\label{eq:conserrorboundall}
\begin{array}{l}
    \mathbb{E}\|\bar{\bm{\omega}}^{t}-\bm{\omega}^{t}\| +
\mathbb{E}\|\bar{\bm{g}}^{t}-\bm{g}^{t}\|+ \mathbb{E}\|\bar\aleph^{t}-\aleph^{t}\| \le 
\frac{\epsilon \hat C \left\|\mathbf{M}\right\|}{1-\beta} \\
    \mathbb{E}\|\bar{\bm{\omega}}^{t}-\bm{\omega}^{t}\|^2 +
\mathbb{E}\|\bar{\bm{g}}^{t}-\bm{g}^{t}\|^2+ \mathbb{E}\|\bar\aleph^{t}-\aleph^{t}\|^2 \le 
\frac{\epsilon^2 \hat C \left\|\mathbf{M}\right\|^2}{1-\beta^2} 
\end{array}
\end{equation}
for some $\hat C>0$ depending on $L_2$, $L_3$ and $\epsilon$.
\end{theorem}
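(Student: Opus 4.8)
The plan is to track the three consensus errors $e_\omega^t := \bm{\omega}^t - \bar{\bm{\omega}}^t$, $e_g^t := \bm{g}^t - \bar{\bm{g}}^t$, and $e_\aleph^t := \aleph^t - \bar{\aleph}^t$ jointly, writing $\mathbf{J} := \frac{1}{m}(\bm{I}\otimes\bm{1}\bm{1}^T)$ for the averaging projector of \eqref{eq:hmcvecstackavg}, so that $e_\cdot^t = (\mathbf{I}-\mathbf{J})(\cdot)^t$. The two structural facts I would establish first are: (i) $\mathbf{J}$ is an orthogonal projection onto the consensus subspace with $\mathbf{J}\mathbf{W} = \mathbf{W}\mathbf{J} = \mathbf{J}$, because $\mathbf{W}$ is doubly stochastic, whence $(\mathbf{W}-\mathbf{J})\mathbf{J}=0$ and $(\mathbf{W}-\mathbf{J})z = (\mathbf{W}-\mathbf{J})(\mathbf{I}-\mathbf{J})z$; and (ii) the mixing contraction $\|(\mathbf{W}-\mathbf{J})(\mathbf{I}-\mathbf{J})z\| \le \beta\|(\mathbf{I}-\mathbf{J})z\|$ supplied by Assumption~\ref{as:beta}, which for the $t$-fold mixing $\mathbf{W}^t$ only improves (I would bound $\beta^t\le\beta$ to obtain the stated $\frac{1}{1-\beta}$ form). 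Subtracting \eqref{eq:hmcvecstackavg} from \eqref{eq:hmcvecstackctaalg} and applying (i)--(ii), each error satisfies a contraction-plus-drive inequality: $\|e_g^{t+1}\| \le \beta\big(\|e_g^t\| + \|(\mathbf{I}-\mathbf{J})(G(\bm{\omega}^t)-G(\bm{\omega}^{t-1}))\|\big)$ and analogously for $\aleph$, while writing the acceptance operator $\mathcal{M}$ as $\bm{\omega}^t$ plus a coordinatewise $\{0,1\}$-gated Euler increment $\epsilon\mathbf{D}^t(\mathbf{p}^t+\epsilon\bm{g}^{t+1})$ gives $\|e_\omega^{t+1}\| \le \beta\big(\|e_\omega^t\| + \epsilon\|\mathbf{p}^t\| + \epsilon^2\|\bm{g}^{t+1}\|\big)$.

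Next I would bound the drive terms. Two-times Lipschitz differentiability of $p$ gives $\|G(\bm{\omega}^t)-G(\bm{\omega}^{t-1})\| \le L_2\|\bm{\omega}^t-\bm{\omega}^{t-1}\|$, and the per-step displacement expands as $\bm{\omega}^t-\bm{\omega}^{t-1} = (\mathbf{W}^{t-1}-\mathbf{I})e_\omega^{t-1} + \epsilon\mathbf{W}^{t-1}\mathbf{D}^{t-1}(\mathbf{p}^{t-1}+\epsilon\bm{g}^t)$, so that $\|\bm{\omega}^t-\bm{\omega}^{t-1}\| \le 2\|e_\omega^{t-1}\| + \epsilon(\|\mathbf{p}^{t-1}\|+\epsilon\|\bm{g}^t\|) = O(\epsilon)$ in expectation once $\mathbb{E}\|\bm{g}^t\|$ is controlled. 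The latter is controlled because the tracking invariant $\mathbf{J}\bm{g}^{t+1} = \mathbf{J}\bm{g}^t + \mathbf{J}(G(\bm{\omega}^t)-G(\bm{\omega}^{t-1}))$ makes the averaged gradient the running average of the local gradients, uniformly bounded along the chain by the same ergodic gradient bound $U$ used in Theorem~\ref{th:approxexact}; this closes a mild bootstrap between the displacement estimate and $\mathbb{E}\|\bm{g}^t\|$. For the second-order drive I would split $H(\bm{\omega}^t)-H(\bm{\omega}^{t-1})$ into a curvature-change piece bounded by $L_3\|\mathbf{p}^t\|^2\|\bm{\omega}^t-\bm{\omega}^{t-1}\|$ and a momentum-change piece controlled through $L_2$ and $\|\mathbf{p}^t\|,\|\mathbf{p}^{t-1}\|$, so that after taking expectations the drive scales with $\mathbb{E}\|\mathbf{p}\|^2 = O(\|\mathbf{M}\|)$; this is precisely where the factor $\|\mathbf{M}\|$ in the stated bound originates.

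I would then assemble the three scalar inequalities into a single vector recursion $v^{t+1} \preceq \beta(\mathbf{I}+\epsilon\mathbf{B})v^t + \beta\epsilon\, c$ for $v^t := (\mathbb{E}\|e_\omega^t\|,\mathbb{E}\|e_g^t\|,\mathbb{E}\|e_\aleph^t\|)$, where the coupling matrix $\mathbf{B}$ collects the $L_2,L_3,\|\mathbf{M}\|$ factors and $c$ is the constant drive. For $\epsilon$ small enough that $\beta(\mathbf{I}+\epsilon\mathbf{B})$ remains a contraction, unrolling the geometric series gives the steady-state bound $\frac{\epsilon\hat{C}\|\mathbf{M}\|}{1-\beta}$ with the initial-condition term decaying as $\beta^t$, and summing components yields the first line of \eqref{eq:conserrorboundall}. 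For the mean-squared (second) line I would run the same recursion in the squared norm: conditioning on the past and exploiting the freshly drawn, chain-common exogenous noise $\mathbf{p}^t,u^t$ (with a Young's inequality to absorb the residual cross term) produces a variance recursion of the form $\mathbb{E}\|e^{t+1}\|^2 \le \beta^2(\mathbb{E}\|e^t\|^2 + \mathbb{E}\|D^t\|^2)$, which sums to the $\frac{1}{1-\beta^2}$ form.

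The main obstacle I anticipate is the second-order consensus error $e_\aleph$. Unlike $e_\omega$ and $e_g$, whose drives vanish with the $O(\epsilon)$ per-step displacement, the drive for $e_\aleph$ is quadratic in the independently resampled momentum and is genuinely $O(\|\mathbf{M}\|L_2)$, not $O(\epsilon)$; matching it to the common $\frac{\epsilon\hat{C}\|\mathbf{M}\|}{1-\beta}$ envelope forces the residual $\epsilon$-dependence into the constant $\hat{C}$ (which is why $\hat{C}$ is permitted to depend on $\epsilon$). Carefully handling this term --- separating the curvature- and momentum-change contributions, controlling their correlation with the accumulated error so the squared recursion closes, and verifying that the coupling through $\|\bm{\omega}^t-\bm{\omega}^{t-1}\|$ does not destroy the joint contraction for small $\epsilon$ --- is where the bulk of the technical effort will lie.
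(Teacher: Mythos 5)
Your overall architecture---a coupled three-component error recursion driven by the mixing contraction, with an $O(\epsilon\|\mathbf{M}\|)$ per-step drive coming from the momentum and the agent-wise accept/reject decisions---matches the paper's. The step that breaks your argument is the decision to discard the benefit of the $t$-fold mixing by bounding $\beta^t\le\beta$. The paper's proof keeps the full factor $\bigl\|\mathbf{W}^t-\tfrac{1}{m}(\bm{I}\otimes\bm{1}\bm{1}^T)\bigr\|\le\beta^t$, i.e.\ a contraction that \emph{strengthens} geometrically with the iteration count; this is the growing-mixing-rounds device the algorithm is explicitly built around, and it is not cosmetic here.

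To see why, substitute your own displacement bound $\|\bm{\omega}^t-\bm{\omega}^{t-1}\|\le 2\|e_\omega^{t-1}\|+O(\epsilon)$ into $\|G(\bm{\omega}^t)-G(\bm{\omega}^{t-1})\|\le L_2\|\bm{\omega}^t-\bm{\omega}^{t-1}\|$: the $g$-recursion becomes $\mathbb{E}\|e_g^{t+1}\|\le\beta\bigl(\mathbb{E}\|e_g^t\|+2L_2\,\mathbb{E}\|e_\omega^{t-1}\|\bigr)+O(\epsilon)$, and the $\aleph$-recursion likewise picks up a coupling of order $\beta L_3\,\mathbb{E}\|\mathbf{p}\|^2\,\mathbb{E}\|e_\omega^{t-1}\|$. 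These off-diagonal coefficients carry no factor of $\epsilon$, so the claimed recursion $v^{t+1}\preceq\beta(\mathbf{I}+\epsilon\mathbf{B})v^t+\beta\epsilon c$ is not what the estimates produce: the true system matrix is $\beta$ times a matrix with $O(L_2)$ and $O(L_3\|\mathbf{M}\|^2)$ entries below the diagonal, and $\beta<1$ alone does not make it a contraction. (Because the reverse couplings are $O(\epsilon^2)$ and $O(\epsilon)$, its spectral radius is roughly $\beta(1+\epsilon\sqrt{2L_2})$, so you could salvage the geometric-series argument under an additional smallness condition on $\epsilon$ relative to $L_2$, $L_3$ and $1-\beta$---but that hypothesis is not in the theorem.) In the paper's version the same $O(1+L_2+L_3)$ coefficients appear but are multiplied by $\beta^{2t}\to 0$, so the homogeneous part of the recursion is summable for any fixed $\epsilon$, and the drive term $\epsilon^2\beta^{2t}\mathbb{E}\|\mathbf{p}^t\|^2$ sums directly to the stated $\epsilon^2\hat C\|\mathbf{M}\|^2/(1-\beta^2)$. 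Retain the $\beta^t$ and your decomposition otherwise goes through.
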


Finally, the most involved derivation is the probabilistic mass error between approximate HMC and the evolution of the average of the iterates, with the discrepancy accumulating due to the evaluation of the gradient vectors at the parameter values taking place at individual agents' parameter estimates rather than at the average. Formally,
\begin{theorem}\label{th:approxavg}
The $L2$ expected error accumulates as,
\begin{equation}
    \mathbb{E}\|\tilde{\bm{\omega}}^t-\bar{\bm{\omega}}^t\|+
    \mathbb{E}\|\tilde{\bm{g}}^t-\bar{\bm{g}}^t\|+\mathbb{E}\|\tilde{\aleph}^t-\bar{\aleph}^t\| \le \epsilon^3 \tilde{C}^{2t} B
\end{equation}
where $\tilde C>1$ and depends on $L_2$ and $\epsilon$, while $B$ depends on $L_2$, $\epsilon$, $\beta$ and $\|\mathbf{M}\|$.
\end{theorem}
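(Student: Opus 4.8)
The plan is to run a coupling recursion between the two chains \eqref{eq:hmcvecstackavgpure} and \eqref{eq:hmcvecstackavg}, which by construction share the exogenous variables $\bm{p}^t$ and $u^t$ and both take values in the consensus subspace, so that the averaging operator $\tfrac{1}{m}(\bm{I}\otimes\bm{1}\bm{1}^T)$ acts as the identity there, has operator norm one, and contributes no contraction. Writing $e_\omega^t=\tilde{\bm{\omega}}^t-\bar{\bm{\omega}}^t$, $e_g^t=\tilde{\bm{g}}^t-\bar{\bm{g}}^t$, and $e_\aleph^t=\tilde{\aleph}^t-\bar{\aleph}^t$, I would subtract the two systems term by term and control $\mathbb{E}\|e_\omega^t\|+\mathbb{E}\|e_g^t\|+\mathbb{E}\|e_\aleph^t\|$ by a single scalar recursion. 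The only structural difference between the systems is that \eqref{eq:hmcvecstackavg} evaluates the stacked gradient $G$ and quadratic form $H$ at the \emph{per-agent} iterates $\bm{\omega}^t_i$, whereas \eqref{eq:hmcvecstackavgpure} evaluates them at the consensus value $\tilde{\bm{\omega}}^t$; all of the accumulated error originates from this mismatch.

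For the two tracking updates, subtraction gives
\[
e_g^{t+1}=\tfrac{1}{m}(\bm{I}\otimes\bm{1}\bm{1}^T)\Big(e_g^t+[G(\tilde{\bm{\omega}}^t)-G(\bm{\omega}^t)]-[G(\tilde{\bm{\omega}}^{t-1})-G(\bm{\omega}^{t-1})]\Big),
\]
and likewise for $e_\aleph$. The key estimate is the triangle-inequality split $\|\tilde{\bm{\omega}}^t-\bm{\omega}^t_i\|\le\|e_\omega^t\|+\|\bar{\bm{\omega}}^t-\bm{\omega}^t_i\|$, whose second term is exactly the consensus error controlled at order $O(\epsilon)$ by Theorem~\ref{th:consensus}. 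Using the smoothness assumption (the gradient is $L_2$-Lipschitz, and the quadratic form $H$ is Lipschitz with a constant built from $L_2$, $L_3$ and the momentum moments $\mathbb{E}\|\bm{p}\|^2$), each gradient/Hessian difference is bounded by $L_2\|e_\omega^t\|$ plus an $O(\epsilon)$ consensus remainder. This is where the step-size factors enter: the consensus error carries one factor of $\epsilon$, while the Euler proposal carries the factor $\epsilon^2$ multiplying $e_g^{t+1}$, and together they produce the $\epsilon^3$ prefactor.

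The delicate step is the parameter update, where the approximate Metropolis operator $\mathcal{M}$ appears. I would decompose $\mathbb{E}\|\mathcal{M}(\cdots_{\tilde{}})-\mathcal{M}(\cdots_{\bar{}})\|$ agent-wise into two contributions: on the event that both coupled chains make the same accept/reject decision, the difference equals either the proposal difference $\|e_\omega^t+\epsilon^2 e_g^{t+1}\|$ or the current-state difference $\|e_\omega^t\|$; on the mismatch event, where one chain accepts and the other rejects, there is a jump of size $O(\epsilon\|\bm{p}^t\|)$. Because $u^t$ is uniform and shared, the probability of the mismatch event is bounded by the discrepancy of the two log-acceptance ratios, which through $\Delta H=-\tfrac12\epsilon^2(\aleph+\|\bm{g}\|^2)$ is controlled by $\epsilon^2(\|e_\aleph^t\|+\|e_g^t\|)$; multiplying the $O(\epsilon)$ jump by the $O(\epsilon^2)$ mismatch probability keeps this term at the same order as the rest. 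The main obstacle of the whole proof is precisely this handling of the accept/reject discontinuity: unlike the smooth gradient terms it cannot be bounded by Lipschitz continuity, and one must argue in expectation over the shared uniform variable that the coupled decisions disagree only on a set whose measure is itself controlled by the very error quantities being estimated, so that the bound closes self-consistently.

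Collecting the three estimates yields a scalar recursion of the form $E^{t+1}\le\tilde{C}^2\,E^t+c\,\epsilon^3$ with $E^t=\mathbb{E}\|e_\omega^t\|+\mathbb{E}\|e_g^t\|+\mathbb{E}\|e_\aleph^t\|$ and $\tilde{C}>1$, the constant exceeding one reflecting that for a nonconvex potential the HMC proposal map is only Lipschitz, not contractive, so the common-mode error between the two chains can grow. Unrolling from $E^0=0$ gives $E^t\le c\,\epsilon^3\sum_{s=0}^{t-1}\tilde{C}^{2s}\le\epsilon^3\tilde{C}^{2t}B$, which is the claimed bound, with $B$ absorbing the geometric sum together with the dependence on $L_2$, $\beta$ and $\|\mathbf{M}\|$ inherited through the consensus constant of Theorem~\ref{th:consensus}.
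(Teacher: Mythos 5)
Your proposal follows essentially the same route as the paper's own proof: couple the chains \eqref{eq:hmcvecstackavg} and \eqref{eq:hmcvecstackavgpure} through the shared $\bm{p}^t,u^t$, inject the consensus error of Theorem~\ref{th:consensus} via the triangle inequality when comparing $G$ and $H$ evaluated at per-agent versus consensus iterates, bound the accept/reject mismatch probability by the discrepancy of the approximate acceptance ratios, and unroll a non-contractive recursion from $E^0=0$ to get the $\epsilon^3\tilde{C}^{2t}B$ form. The one point where your sketch is looser than the paper is the closure of the recursion: because the mismatch probability and the jump size are \emph{correlated} random error quantities, their product in expectation cannot be bounded by first moments alone, and the paper therefore runs a vector recursion (the matrices $\mathbf{A}_1,\dots,\mathbf{A}_4$ in \eqref{eq:matrecursion}) over both $\mathbb{E}\|e^t\|$ and $\mathbb{E}\|e^t\|^2$ via Young's inequality, rather than the purely scalar first-moment recursion $E^{t+1}\le\tilde{C}^2E^t+c\epsilon^3$ you state.
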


Considering Theorems~\ref{th:approxexact},~\ref{th:consensus} and~\ref{th:approxavg}, we have that there is a stable constant error proportional to $\epsilon^2$, as according to the consensus error, and an error that accumulates exponentially with the number of iterations and scales with $\epsilon^3$. Thus, broadly speaking, the decentralized HMC Algorithm as presented in Algorithm~\ref{alg:mainalg} manages to recreate the behavior of centralized HMC up to some error.
Given that HMC exhibits ergodicity towards the stationary distribution in the general case, and geometric ergodicity for strongly log-concave distributions~\cite{durmus2017convergence},  our approach generates a sequence of samples with controlled approximate accuracy relative to one with these properties.

\section{Examples}\label{s:ex}
In this section we illustrate the performance of Algorithm~\ref{alg:mainalg}, which we shall refer to as \textbf{DecentralizedMALA}, comparing it to two baselines, \textbf{CentralizedHMC}, which is performing HMC on the entire data set, and \textbf{DecentralizedULA}, which we implement as in~\cite{parayil2020decentralized}.\footnote{However, unlike in \cite{parayil2020decentralized}, where they use stochastic gradients we apply their method without taking stochastic gradients.} We leave the discussion of the computing platform, problem dimensions, and hyperparameter selection to the appendix.
\subsection{Gaussian Mixture Model}
The purpose of this example is to illustrate pictorially the importance of the Metropolis adjustment step. Here we sample from a Gaussian mixture given by 
$$\theta_1 \sim \mathcal{N}(0,\sigma_1^2);\ \ \theta_2 \sim \mathcal{N}(0,\sigma_2^2) \text{ and } x_i \sim \left(\frac{1}{2} \mathcal{N}(\theta_1,\sigma_x^2) + \frac{1}{2} \mathcal{N}(\theta_1 + \theta_2,\sigma_x^2)\right)$$
where $\sigma_1^2 = 10, \sigma_2^2 = 1, \sigma_x^2 = 2$. For the centralized setting, $100$ data samples are drawn from the model with $\theta_1 = 0$ and $\theta_2 = 1$, These are then split into $5$ sets of 20 samples that are made available to each agent in the decentralised network. This is a similar setting to that of \cite{parayil2020decentralized}. Figure \ref{fig:GaussianMM} compares sampling from the posterior distribution in both the centralized and decentralized settings. The contour plot corresponds to the true value of the log potential. The plot in the first column displays the samples from the centralized approach and can be thought of as the ``ground truth". Columns two and three display the materialized samples for the decentralized setting, where column two applies the Taylor approximated Metropolis adjustment and column three does not. The discrepancy between these two schemes can be seen qualitatively via the spread of the samples. The Metropolis adjustment prevents the collection of samples in low probability regions and ensures that the samples stay in the high probability region in a similar manner to the centralized approach. Leaving out the Metropolis step means that samples that have low log probability are never rejected.


\begin{figure}[h!]
  \includegraphics[width=\linewidth]{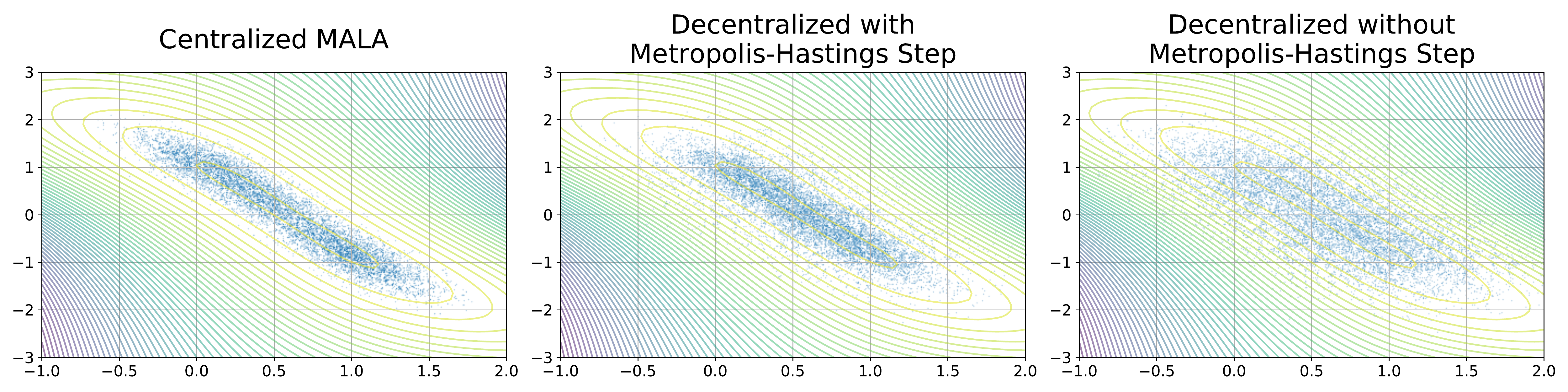}
  \caption{Gaussian mixture model.}
  \label{fig:GaussianMM}
\end{figure}


\subsection{Linear Regression}
\begin{wrapfigure}{r}{0.55\textwidth}
  \begin{center}
    \includegraphics[width=0.5\textwidth]{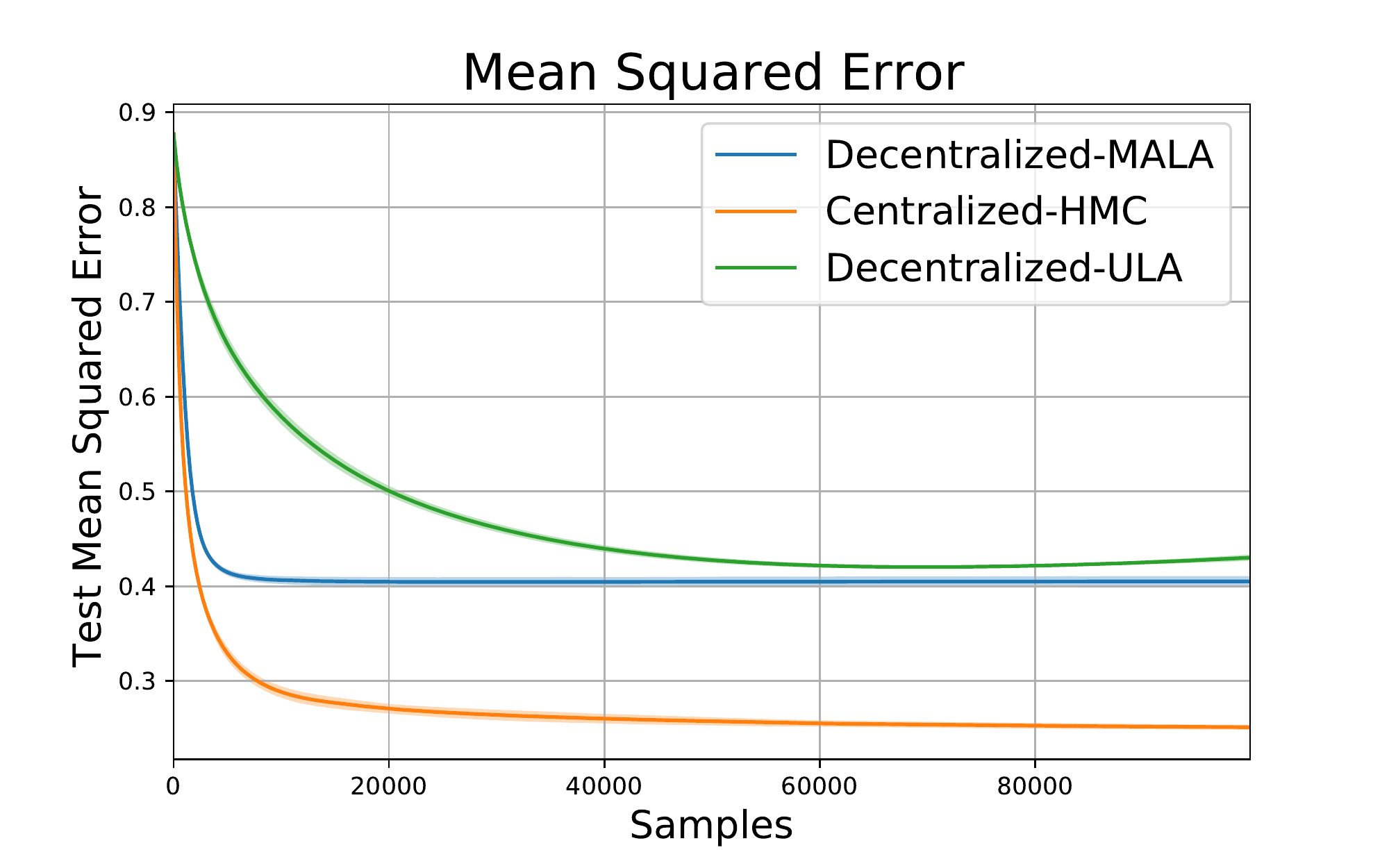}
  \end{center}
  \caption{Mean Squared Error by Samples for Linear regression. The Decentralized MALA}\label{fig:lin_reg}
\end{wrapfigure}
For our first example with real data, we investigate Bayesian linear regression with four agents applied to the Boston Housing data set \cite{harrison1978hedonic}. In the decentralized setting, each agent is only given access to separate parts of the 13-dimensional feature space. Agents 1-3 have access to 3 input features (each a different set of 3) and agent 4 sees the remaining 4 features. We use a simple normal prior for each parameter of the model and compare the results in Figure \ref{fig:lin_reg}, which
displays the cumulative mean squared error over samples. Our approach (in blue) converges and outperforms the baseline decentralized approach. 
The centralized approach, with access to all features achieves the best performance.


\subsection{Logistic Regression}

For logistic regression we work with the MNIST data set \cite{lecun1998gradient}. We define two scenarios for decentralized learning. In scenario one, we only allow each agent partial observation of the total training set. In scenario two, we distribute the agents in a ring network such that each agent can only communicate with its two neighbors. Furthermore, for the ring network, each agent only has access to two classes of the training data. 

\textbf{Partial Observation.} For this experiment there are four agents where each only sees one quarter of the MNIST digit figure. The rest of the feature space is set to zeros. Figure \ref{fig:log_partial} shows the results in statistical performance for the three schemes. The plots show cumulative performance averaged over all agents. The confidence intervals are one standard deviation and are calculated over 9 random seeds for all approaches. 

\begin{figure}[h!]
\centering
  \includegraphics[width=0.7\linewidth]{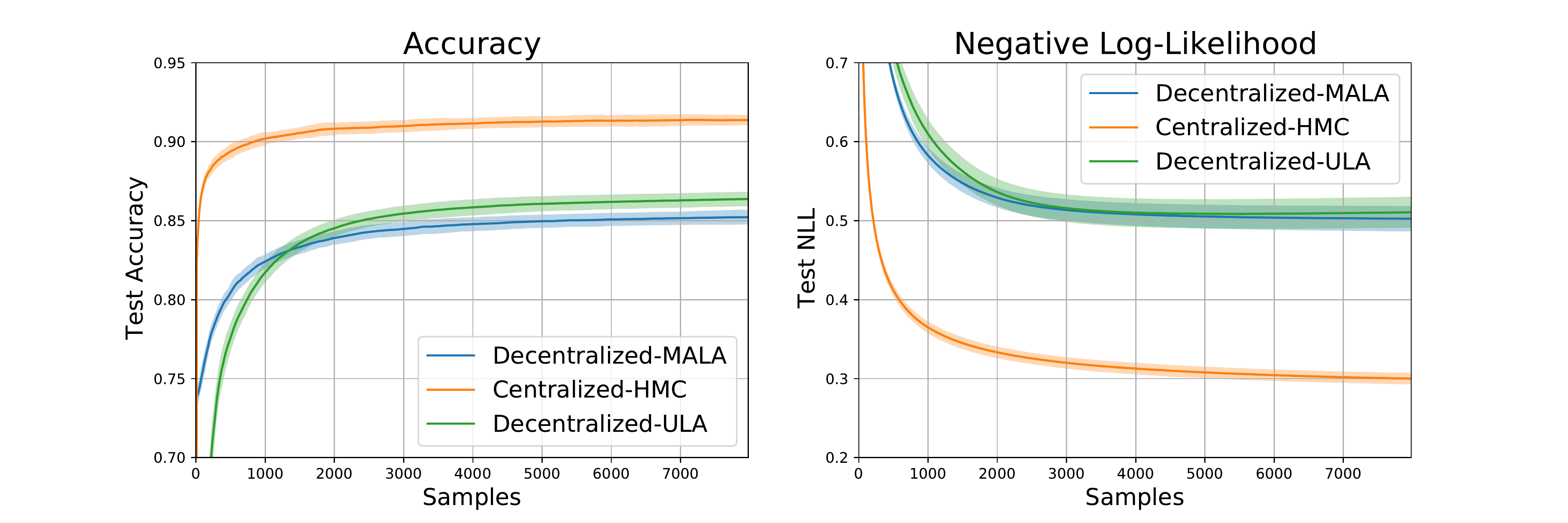}
  \caption{Partial MNIST Set-up Logistic Regression. The two Decentralized algorithms perform similarly, each obtaining an overall worse estimate than the centralized baseline.}
  \label{fig:log_partial}
\end{figure}

\textbf{Ring Network.} For this experiment there are five agents connected in a ring formation, such that each agent can only communicate with two other agents. Each agent only has access to two classes of digits in their training data, e.g. agent 0's training data consists of digits 0-1. Figure \ref{fig:log_ring} displays that in this setting, \textbf{DecentralizedMALA} achieves similar performance to \textbf{CentralizedHMC}, both of which in turn outperform \textbf{DecentralizedULA}.

\begin{figure}[h!]
\centering
  \includegraphics[width=0.7\linewidth]{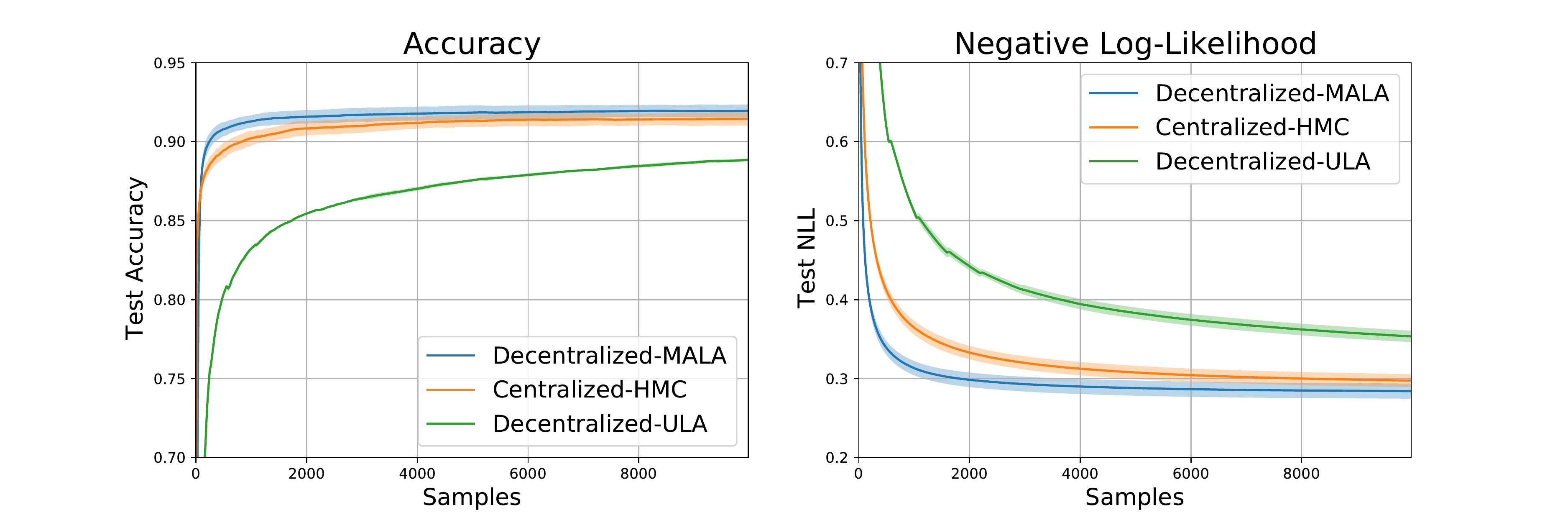}
  \caption{Ring Set-up Logistic Regression. \textbf{DecentralizedMALA} performs just as well as its centralized counterpart, while outperforming the Langevin-based approach.}
  \label{fig:log_ring}
\end{figure}


\subsection{Bayesian Neural Networks}

A unique contribution of our proposed algorithm is its treatment of nonconvex functions which has not been addressed in earlier work on decentralized sampling \cite{kungurtsev2020stochastic,gurbuzbalaban2020decentralized,parayil2020decentralized}. To study this class of potentials, we ran our decentralized sampling scheme over two agents, each with their own neural network model and data. We provide each agent with half the classes available in the MNIST data set, i.e. one with access to digits 0-4 and the other with access to digits 5-9. Each network is fully fully connected with a single hidden layer of $100$ units.
Figure \ref{fig:bnn_split} displays the cumulative performance over the number of samples, averaged across the two agents for the test data. As is evident from the figure, our approach is able to learn in a decentralized fashion over nonconvex models. Without explicitly passing data between the two agents, each agent achieves good performance over classes it has never seen before.

\begin{figure}[h!]
\centering
  \includegraphics[width=0.7\linewidth]{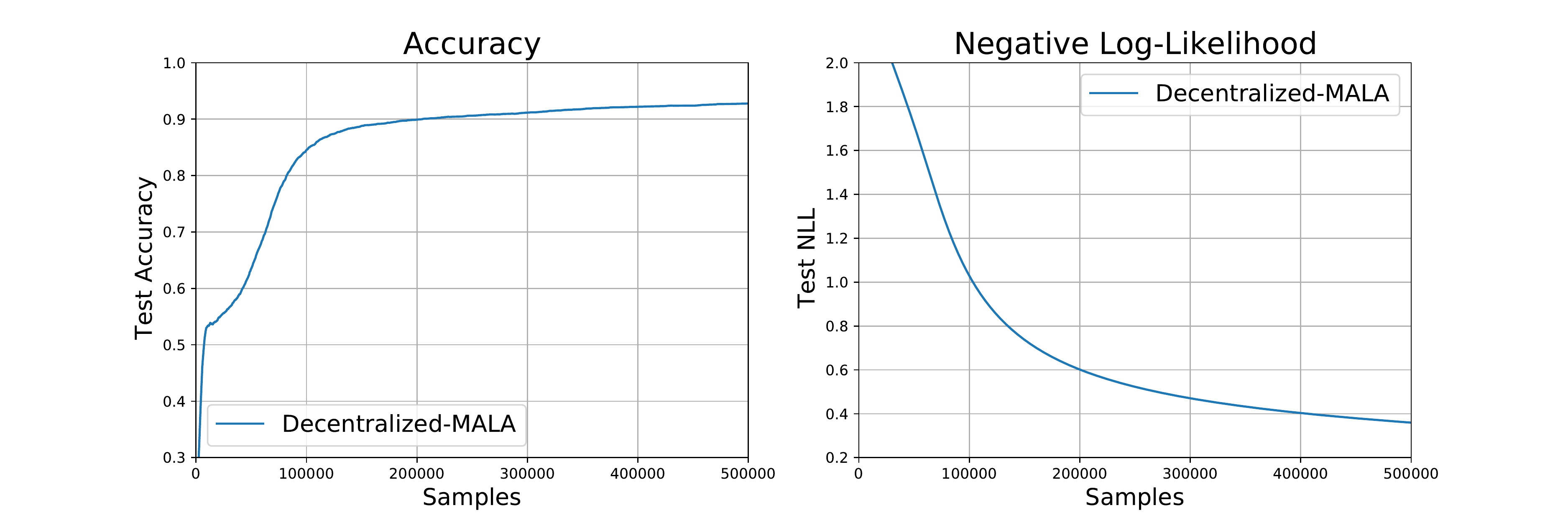}
  \caption{BNNs: Two Agents, one with digits 0-4 and the other with digits 5-9. Test data contains all classes and performance is average over both agents.}
  \label{fig:bnn_split}
\end{figure}

\section{Perspectives and Conclusion}

We presented an Algorithm that performs Metropolis-adjusted HMC sampling in a decentralized environment. Theoretically it appears to behave close in probability to exact HMC sampling, and numerically, appears to perform well on standard datasets. Theoretically, it would be interesting to establish ergodicity and convergence in probability of the chain generated by the Algorithm itself, rather than just a quantitative distance to the ergodic chain -- whether this is possible is still an open question.

\bibliographystyle{plain}
\bibliography{refs}


\appendix

\section{Proofs of Theoretical Results}

\subsection{Coupling Approximate to Centralized HMC}
We first bound the distance in probability for the chain as governing the update for $\hat{\bm{\omega}}^t$ in~\eqref{eq:centralhmc} and
for $\bar{\bm{\omega}}^t$ in~\eqref{eq:hmcvecstackavg}.

Note that by construction, it always holds that,
\begin{equation}\label{eq:centralgaleph}
\tilde{\bm{g}}^t = \mathbf{1}_m\otimes \sum_i \nabla p(\tilde{\bm{\omega}},X_i,Y_i)=\bar{G}(\tilde{\bm{\omega}})\text{ and }\tilde{\aleph}^t = 
\mathbf{1}_m\otimes \sum_i (\mathbf{p}_i^t)^T\left(\nabla^2_{\bm{\omega}^2} p(\tilde{\bm{\omega}},X_i,Y_i)\right)(\mathbf{p}_i^t) 
\end{equation}

Thus the only cause of a discrepancy between the chains for $\tilde{\bm{\omega}}^t$ and $\hat{\bm{\omega}}^t$ is the truncation of the potential at the second order to compute the acceptance probability.
In particular we know that the error in this case is bounded by the Taylor expansion error, which is bounded by,
\begin{equation}\label{eq:alepherr}
\left|\frac{\epsilon^3}{6} \frac{\partial^3 U}{\partial \bm{\omega}^3} [\bm{p}^t][\bm{p}^t][\bm{p}^t]\right| \le \frac{\epsilon^3 L_3 \|\bm{p}^t\|^3}{6}
\end{equation}
where $L_3$ is the Lipschitz constant for the second derivative of the potential function. 

Thus the discrepancy between $\hat{\bm{\omega}}$ and $\tilde{\bm{\omega}}$
amounts to the possibility of acceptance in one case and not the other, whose probability is bounded by~\eqref{eq:alepherr} with the error being bounded by the change in the step, or $\epsilon(\bm{p}^t+\epsilon \bar{G}(\tilde{\bm{\omega}}^t))$ (and $\bar{G}(\hat{\bm{\omega}})$ in the other case). Let us write this formally,
\begin{theorem}
The $L2$ distance between $\tilde{\bm{\omega}}^t$ and $\hat{\bm{\omega}}^t$ is bounded up to iteration $t$ by the following expression,
\begin{equation}
    \mathbb{E}\left\|\tilde{\bm{\omega}}^{t+1}-\hat{\bm{\omega}}^{t+1}\right\|^2 \le \sum\limits_{s=0}^t \left[1+(L^2_2 \epsilon^4+1)\left(\frac{4\epsilon^3\mathbb{E}\|\bm{p}\|^3 L_3}{3}\right)\right]^s 
    \left(\frac{4\epsilon^5\mathbb{E}\|\bm{p}\|^3 L_3}{3}\right)\left[\mathbb{E}\|\bm{p}^t\|^2+\epsilon^2 U\right]
\end{equation}
\end{theorem}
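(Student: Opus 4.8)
The plan is to run a one-step coupling between the approximate-HMC chain $\tilde{\bm{\omega}}^t$ of \eqref{eq:hmcvecstackavgpure} and the exact centralized-HMC chain $\hat{\bm{\omega}}^t$ of \eqref{eq:centralhmc}, driving both with the \emph{same} momentum $\bm{p}^t$ and uniform variable $u^t$ and initializing them at a common point, so that $e^0 := \tilde{\bm{\omega}}^0-\hat{\bm{\omega}}^0 = 0$. The key simplification, already recorded in \eqref{eq:centralgaleph}, is that the tracked quantities of the $\tilde{}$-chain are \emph{exact}, $\tilde{\bm{g}}^{t+1} = \bar{G}(\tilde{\bm{\omega}}^t)$ with $\tilde{\aleph}^{t+1}$ the true quadratic form, so that both chains share the identical proposal map $\bm{\omega}\mapsto \bm{\omega}+\epsilon(\bm{p}^t+\epsilon\bar{G}(\bm{\omega}))$. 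Hence the \emph{only} mechanism that can separate the two chains at a given step is a disagreement in the accept/reject decision caused by replacing the exact $\Delta H$ by its second-order Taylor surrogate, exactly as the surrounding discussion asserts.

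First I would quantify the probability of such a disagreement. Since both chains threshold their respective log-acceptance ratio against the same $u^t$ and $v\mapsto\min(1,e^v)$ is $1$-Lipschitz, the $u^t$-probability that the decisions differ is at most $|\rho_{\text{approx}}-\rho_{\text{exact}}|$, which is precisely the discarded third-order Taylor remainder bounded in \eqref{eq:alepherr} by $\tfrac{\epsilon^3 L_3\|\bm{p}^*\|^3}{6}$. Taking expectation over the Gaussian momentum and bounding $\|\bm{p}^*\|^3=\|\bm{p}^t+\epsilon\bar{G}\|^3$ by a constant multiple of $\|\bm{p}^t\|^3$ yields the per-step disagreement budget $\delta := \tfrac{4\epsilon^3\,\mathbb{E}\|\bm{p}\|^3 L_3}{3}$.

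Next I would set up the squared-error recursion by writing $e^{t+1}=e^t+\Delta^t$ and conditioning on $\mathcal{F}_t$. When both chains make the same decision, the increment is either $0$ (joint rejection) or $\epsilon^2(\bar{G}(\tilde{\bm{\omega}}^t)-\bar{G}(\hat{\bm{\omega}}^t))$ (joint acceptance), which by the smoothness assumption (gradient-Lipschitz constant $L_2$) is bounded by $\epsilon^2 L_2\|e^t\|$; when the decisions differ, an event of conditional probability at most $\delta$, the increment is the full HMC step $\pm\epsilon(\bm{p}^t+\epsilon\bar{G})$, whose expected squared norm is $\epsilon^2\mathbb{E}\|\bm{p}^t\|^2+\epsilon^4\mathbb{E}\|\bar{G}\|^2\le \epsilon^2[\mathbb{E}\|\bm{p}^t\|^2+\epsilon^2 U]$ since $\bm{p}^t$ is mean-zero and independent of the position and $\|\bar{G}\|^2\le U$ by ergodicity. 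Expanding $\|e^{t+1}\|^2=\|e^t\|^2+2\langle e^t,\Delta^t\rangle+\|\Delta^t\|^2$, bounding the cross term by Young's inequality, and collecting terms yields a linear recursion
\[
\mathbb{E}\|e^{t+1}\|^2 \le \left[1+(L_2^2\epsilon^4+1)\delta\right]\mathbb{E}\|e^t\|^2 + \delta\,\epsilon^2\left[\mathbb{E}\|\bm{p}^t\|^2+\epsilon^2 U\right],
\]
whose coefficients match those in the statement. Unrolling from $e^0=0$ then produces the geometric sum $\sum_{s=0}^t[1+(L_2^2\epsilon^4+1)\delta]^s\,\delta\,\epsilon^2[\mathbb{E}\|\bm{p}^t\|^2+\epsilon^2 U]$, which is exactly the claimed bound.

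The main obstacle is the combined bookkeeping in the one-step recursion: one must couple the accept/reject randomness so that the \emph{only} freshly injected error is carried by the disagreement indicator, argue that the accumulated position error $e^t$ is uncorrelated with the new momentum $\bm{p}^t$ so that the cross term $\langle e^t,\Delta^t\rangle$ integrates favorably, and track the powers of $\epsilon$ so that the joint-acceptance propagation contributes as $L_2^2\epsilon^4$ and the disagreement injection as $\delta\,\epsilon^2[\mathbb{E}\|\bm{p}\|^2+\epsilon^2 U]$ rather than through spurious lower-order-in-$\epsilon$ terms; pinning down the precise growth factor $1+(L_2^2\epsilon^4+1)\delta$ rather than the naive $(1+\epsilon^2 L_2)^2$ is the delicate part. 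A secondary technical point is the passage from the pointwise remainder $\tfrac{\epsilon^3 L_3\|\bm{p}^*\|^3}{6}$ to the averaged budget $\delta$, which requires third-moment control of the Gaussian momentum and absorbing the gradient contribution through the ergodic bound $U$.
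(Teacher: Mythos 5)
Your proposal is correct and follows essentially the same route as the paper's own proof: couple the two chains through shared $\bm{p}^t$ and $u^t$, use \eqref{eq:centralgaleph} to reduce the discrepancy to accept/reject disagreement, bound that disagreement probability by the Taylor remainder \eqref{eq:alepherr}, and unroll the resulting one-step squared-error recursion from a common initialization. The only noteworthy difference is that you explicitly track the joint-acceptance increment $\epsilon^2(\bar{G}(\tilde{\bm{\omega}}^t)-\bar{G}(\hat{\bm{\omega}}^t))$ and flag the resulting tension with the stated growth factor, a point the paper's derivation glosses over by folding the $L_2^2\epsilon^4$ contribution into the disagreement terms.
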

\begin{proof}

Thus we have that,
\[
\begin{array}{l}
\mathbb{E}\left\|\tilde{\bm{\omega}}^{t+1}-\hat{\bm{\omega}}^{t+1}\right\|^2
\le \mathbb{E}\left\|\tilde{\bm{\omega}}^{t}-\hat{\bm{\omega}}^{t}\right\|^2 \\ 
\qquad+\mathbb{P}\left[(\mathcal{M}(\tilde{\bm{\omega}}^{*},\tilde{\bm{\omega}}^{t},\tilde{\aleph}^{t+1},u^t) = \tilde{\bm{\omega}}^{*}) \cap (\hat{\mathcal{M}}(\hat{\bm{\omega}}^{*},\hat{\bm{\omega}}^{t},u^t) = \hat{\bm{\omega}}^{t})\right]
\mathbb{E}\left\|\epsilon(\bm{p}^t+\epsilon \bar{G}(\tilde{\bm{\omega}}^t))+\tilde{\bm{\omega}}^t-\hat{\bm{\omega}}^t\right\|^2
\\
\qquad +\mathbb{P}\left[(\mathcal{M}(\tilde{\bm{\omega}}^{*},\tilde{\bm{\omega}}^{t},\tilde{\aleph}^{t+1},u^t) = \tilde{\bm{\omega}}^{t}) \cap (\hat{\mathcal{M}}(\hat{\bm{\omega}}^{*},\hat{\bm{\omega}}^{t},u^t) = \hat{\bm{\omega}}^{*})\right] 
\mathbb{E}\left\|\epsilon(\bm{p}^t+\epsilon \bar{G}(\hat{\bm{\omega}}^t))+\hat{\bm{\omega}}^t-\tilde{\bm{\omega}}^t\right\|^2 
\\
\le \mathbb{E}\left\|\tilde{\bm{\omega}}^{t}-\hat{\bm{\omega}}^{t}\right\|^2+
\left(\frac{\epsilon^3 L_3 \mathbb{E}\|\bm{p}^t\|^2}{3}\right)\left[\mathbb{E}\left\|\tilde{\bm{\omega}}^{t}-\hat{\bm{\omega}}^{t}\right\|^2+\epsilon^2\mathbb{E}\left\|\bm{p}^t\right\|^2+\epsilon^4\mathbb{E}\left\|\bar{G}(\tilde{\bm{\omega}}^t))\right\|^2\right]
\\
\qquad\qquad+
\left(\frac{\epsilon^3 L_3 \mathbb{E}\|\bm{p}^t\|^2}{3}\right)\left[\mathbb{E}\left\|\tilde{\bm{\omega}}^{t}-\hat{\bm{\omega}}^{t}\right\|^2+\epsilon^2\mathbb{E}\left\|\bm{p}^t\right\|^2+\epsilon^4\mathbb{E}\left\|\bar{G}(\hat{\bm{\omega}}^t))\right\|^2\right]
\\
\le \mathbb{E}\left\|\tilde{\bm{\omega}}^{t}-\hat{\bm{\omega}}^{t}\right\|^2+
\left(\frac{2\epsilon^3 L_3 \mathbb{E}\|\bm{p}^t\|^2}{3}\right)\left[\mathbb{E}\left\|\tilde{\bm{\omega}}^{t}-\hat{\bm{\omega}}^{t}\right\|^2+\epsilon^2\mathbb{E}\left\|\bm{p}^t\right\|^2\right]
\\
\qquad\qquad+
\left(\frac{4\epsilon^7 L_3 \mathbb{E}\|\bm{p}^t\|^2}{3}\right)\left[
\mathbb{E}\left\|\bar{G}(\hat{\bm{\omega}}^t))\right\|^2+
L_2^2\mathbb{E}\left\|\tilde{\bm{\omega}}^{t}-\hat{\bm{\omega}}^{t}\right\|^2
\right]
\\
\le \left[1+(L^2_2 \epsilon^4+1)\left(\frac{4\epsilon^3\mathbb{E}\|\bm{p}\|^3 L_3}{3}\right)\right]\mathbb{E}\left\|\tilde{\bm{\omega}}^{t}-\hat{\bm{\omega}}^{t}\right\|^2
\\ \qquad\qquad + \left(\frac{4\epsilon^5\mathbb{E}\|\bm{p}\|^3 L_3}{3}\right)
\left[\mathbb{E}\|\bm{p}^t\|^2+\epsilon^2 U\right]
\end{array}
\]
where $U$ is a bound for $\mathbb{E}\|\hat{\bm{\omega}}^t\|$
which exists by the ergodicity of~\eqref{eq:centralhmc}.

Thus, with $\tilde{\bm{\omega}}^0=\hat{\bm{\omega}}^0$ we have that,
\begin{equation}
    \mathbb{E}\left\|\tilde{\bm{\omega}}^{t+1}-\hat{\bm{\omega}}^{t+1}\right\|^2 \le \sum\limits_{s=0}^t \left[1+(L^2_2 \epsilon^4+1)\left(\frac{4\epsilon^3\mathbb{E}\|\bm{p}\|^3 L_3}{3}\right)\right]^s 
    \left(\frac{4\epsilon^5\mathbb{E}\|\bm{p}\|^3 L_3}{3}\right)\left[\mathbb{E}\|\bm{p}^t\|^2+\epsilon^2 U\right]
\end{equation}
\end{proof}

\subsection{Consensus between Decentralized and Averaged HMC}
Now we relate the process as generated by Algorithm~\ref{alg:mainalg} to the average dynamics as given by~\eqref{eq:hmcvecstackavg}. 

\begin{theorem}
The consensus error satisfies the following $L2$ expectation bound:
\begin{equation}\label{eq:conserrorboundaa}
    \mathbb{E}\|\bar{\bm{\omega}}^{t}-\bm{\omega}^{t}\|^2 +
\mathbb{E}\|\bar{\bm{g}}^{t}-\bm{g}^{t}\|^2+ \mathbb{E}\|\bar\aleph^{t}-\aleph^{t}\|^2 \le 
\frac{\epsilon^2 \hat C \left\|\mathbf{M}\right\|^2}{1-\beta^2}
\end{equation}
for some $\hat C>0$ depending on $L_2$, $L_3$ and $\epsilon$, and by a similar reasoning, using redundant notation,
\begin{equation}\label{eq:conserrorboundab}
    \mathbb{E}\|\bar{\bm{\omega}}^{t}-\bm{\omega}^{t}\| +
\mathbb{E}\|\bar{\bm{g}}^{t}-\bm{g}^{t}\|+ \mathbb{E}\|\bar\aleph^{t}-\aleph^{t}\| \le 
\frac{\epsilon \hat C \left\|\mathbf{M}\right\|}{1-\beta}
\end{equation}

\end{theorem}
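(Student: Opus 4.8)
The plan is to run a contraction argument driven by the spectral gap of the mixing operator, treating the three consensus errors as a single coupled linear recursion. Write $J := \frac{1}{m}(\bm{I}\otimes\bm{1}\bm{1}^T)$ for the block-averaging projection, so that $\bar{\bm\omega}^t = J\bm\omega^t$, $\bar{\bm g}^t = J\bm g^t$, $\bar\aleph^t = J\aleph^t$, and define the consensus errors $e_\omega^t := (\bm I - J)\bm\omega^t$, $e_g^t := (\bm I - J)\bm g^t$, $e_\aleph^t := (\bm I - J)\aleph^t$. The two facts I would use repeatedly are that $\mathbf W^t$ and $J$ commute with $\mathbf W^t J = J$, and that on the disagreement subspace $\mathbf W^t$ contracts, $\|(\mathbf W^t - J)x\| \le \beta^t\|(\bm I - J)x\| \le \beta\|(\bm I - J)x\|$ for $t\ge 1$, which is exactly where Assumption~\ref{as:beta} ($\beta<1$) enters (the geometrically growing number of mixing rounds only strengthens this, but $\beta$ already suffices for boundedness). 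I would also rewrite the Metropolis operator as $\mathcal M(\cdots) = \bm\omega^t + \epsilon A^t(\bm p^t + \epsilon\bm g^{t+1})$, where $A^t$ is the block-diagonal $\{0,1\}$ acceptance matrix with $\|A^t\|\le 1$.

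Subtracting the averaged recursion~\eqref{eq:hmcvecstackavg} from the decentralized recursion~\eqref{eq:hmcvecstackctaalg} and using $(\mathbf W^t - J)J = 0$, each error obeys a one-step identity of the form $e^{t+1} = (\mathbf W^t - J)\big(e^t + (\bm I - J)\,\delta^t\big)$, hence $\|e^{t+1}\| \le \beta\big(\|e^t\| + \|\delta^t\|\big)$, where $\delta^t$ is the per-iteration increment. For $\omega$ the increment is $\epsilon A^t(\bm p^t + \epsilon\bm g^{t+1})$, bounded by $\epsilon(\|\bm p^t\| + \epsilon\|\bm g^{t+1}\|)$; for $\bm g$ it is $\delta G^t = G(\bm\omega^t) - G(\bm\omega^{t-1})$, which by the two-times-Lipschitz smoothness assumption is at most $L_2\|\bm\omega^t - \bm\omega^{t-1}\| \le L_2\big(2\|e_\omega^{t-1}\| + \epsilon(\|\bm p^{t-1}\| + \epsilon\|\bm g^t\|)\big)$, after substituting $\bm\omega^t - \bm\omega^{t-1} = (\mathbf W^{t-1}-\bm I)e_\omega^{t-1} + \epsilon\mathbf W^{t-1}A^{t-1}(\bm p^{t-1}+\epsilon\bm g^t)$; and for $\aleph$ it is $\delta H^t = H(\bm\omega^t)-H(\bm\omega^{t-1})$, bounded through $L_2$ and $L_3$ together with the quadratic dependence on the momenta. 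Taking expectations I would replace $\mathbb E\|\bm p\|,\mathbb E\|\bm p\|^2,\mathbb E\|\bm p\|^3$ by their Gaussian values (powers of $\|\mathbf M\|^{1/2}$), which is where the $\|\mathbf M\|$ factor originates, and use the tracking-exactness identity $J\bm g^{t} = J\,G(\bm\omega^{t})$ (from the initialization) together with the ergodicity-type uniform gradient bound $U$ of Theorem~\ref{th:approxexact} to control $\mathbb E\|\bm g^t\|$ uniformly in $t$.

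These estimates assemble into a coupled linear recursion $v^{t+1} \le \beta(Mv^t + b)$ for the vector $v^t = (\mathbb E\|e_\omega^t\|,\mathbb E\|e_g^t\|,\mathbb E\|e_\aleph^t\|)^T$, with a fixed nonnegative matrix $M$ whose entries involve $1, L_2, L_3, \epsilon$ and a forcing $b$ of order $\|\mathbf M\|$. Provided the spectral radius of $\beta M$ is below one — which holds for $\epsilon$ small enough and is precisely the content of $\hat C$ depending on $L_2, L_3, \epsilon$ — the recursion admits the uniform fixed-point bound $v^t \le (\bm I - \beta M)^{-1}\beta b$, and collapsing the matrix inverse into a single constant yields the advertised bound~\eqref{eq:conserrorboundab}. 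The squared bound~\eqref{eq:conserrorboundaa} follows by the identical scheme after replacing the recursion by $\|e^{t+1}\|^2 \le \beta^2(1+\eta)\|e^t\|^2 + \beta^2(1+\eta^{-1})\|\delta^t\|^2$ for an $\eta$ chosen so that $\beta^2(1+\eta)<1$, which produces the $\frac{1}{1-\beta^2}$ denominator and the $\epsilon^2\|\mathbf M\|^2$ forcing.

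The step I expect to be the main obstacle is controlling the $\aleph$ recursion. Because $\bm p^t$ and $\bm p^{t-1}$ are independent fresh draws, the increment $\delta H^t$ retains an $O(L_2\|\mathbf M\|)$ component that does \emph{not} vanish with $\epsilon$; consequently the genuine order of $e_\aleph^t$ is $O(\|\mathbf M\|/(1-\beta))$ rather than $O(\epsilon\|\mathbf M\|/(1-\beta))$, and extracting the stated $\epsilon$ prefactor for that term is possible only by letting $\hat C$ absorb a $1/\epsilon$ factor (legitimate, since $\hat C$ is permitted to depend on $\epsilon$, but it means the clean $O(\epsilon)$ — equivalently $O(\epsilon^2)$ mean-square — scaling is really sharp only for the $\omega$ and $\bm g$ components). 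The secondary difficulty is guaranteeing the coupled contraction $\rho(\beta M)<1$ despite the mutual dependence of $e_g$ on $e_\omega^{t-1}$ and of $e_\omega$ on $\bm g^{t+1}$; I would close this either by requiring $\epsilon$ small or, more in the spirit of the algorithm, by invoking the geometrically growing number of mixing rounds so that the effective contraction $\beta^t$ dominates the bounded coupling.
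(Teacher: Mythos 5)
Your proposal follows the same coupling as the paper's proof: subtract the averaged recursion \eqref{eq:hmcvecstackavg} from \eqref{eq:hmcvecstackctaalg}, use $J\mathbf{W}^t=J$ (writing $J=\frac{1}{m}(\bm{I}\otimes\bm{1}\bm{1}^T)$ for the averaging projection, as you do) so that each error lives on the disagreement subspace, bound the per-step increments by $L_2$, $L_3$ and Gaussian moments of $\bm{p}$, and sum a geometric series. The one substantive difference is how the contraction is closed. You downgrade $\|\mathbf{W}^t-J\|\le\beta^t$ to a single factor $\beta$ and then require the spectral radius of the coupled matrix $\beta M$ to be below one, which -- as you yourself note -- forces a smallness condition on $\epsilon$: the $e_g\leftarrow e_\omega$ entry of $M$ is $O(L_2)$ with no $\epsilon$ attached, the reverse coupling is $O(\epsilon^2)$, so the off-diagonal product is controllable but only for $\epsilon\lesssim(1-\beta)^2/L_2$. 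The paper instead keeps the full $\beta^{2t}$ coming from the $t$ mixing rounds performed at iteration $t$ (this is precisely the role of the device from \cite{berahas2018balancing}), so that the recursion coefficient $2(1+L_2+L_3)\beta^{2t}$ in \eqref{eq:coupledecavg} falls below one eventually for \emph{any} fixed constants, with no restriction on $\epsilon$ beyond what $\hat C$ absorbs; the forcing term $\epsilon^2\beta^{2t}\mathbb{E}\|\bm{p}^t\|^2$ then sums to the stated $\epsilon^2\|\mathbf{M}\|^2/(1-\beta^2)$. In short, what you relegate to a fallback is the paper's main mechanism, and your primary route proves the theorem only under an additional step-size restriction that the statement does not impose.

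The obstacle you flag on the $\aleph$ recursion is real, and the paper's own proof does not resolve it. In \eqref{eq:coupledecavg} the increment $H(\bm{\omega}^t)-H(\bm{\omega}^{t-1})$ is silently replaced by $H(\bm{\omega}^t)-H(\bar{\bm{\omega}}^t)-H(\bm{\omega}^{t-1})+H(\bar{\bm{\omega}}^{t-1})$, i.e.\ the cross term $(\bm{I}-J)\left(H(\bar{\bm{\omega}}^t)-H(\bar{\bm{\omega}}^{t-1})\right)$ is dropped. Because $\bm{p}^t$ and $\bm{p}^{t-1}$ are independent fresh draws and the local potentials $U_i$ differ across agents, that term is $O(L_2\|\mathbf{M}\|)$ and does not vanish with $\epsilon$, exactly as you observe; the clean $O(\epsilon)$ (respectively $O(\epsilon^2)$ mean-square) scaling is genuine only for the $\bm{\omega}$ and $\bm{g}$ components, and for $\aleph$ the stated form holds only by letting $\hat C$ carry a $1/\epsilon$. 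Identifying this is a point in favor of your write-up rather than a defect of it.
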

\begin{proof}
Consider the recursion in expected $L2$ error. 

\begin{equation}\label{eq:coupledecavg}
\begin{array}{l}
\mathbb{E}\|\bar{\bm{\omega}}^{t+1}-\bm{\omega}^{t+1}\|^2 +
\mathbb{E}\|\bar{\bm{g}}^{t+1}-\bm{g}^{t+1}\|^2+ \mathbb{E}\|\bar\aleph^{t+1}-\aleph^{t+1}\|^2 \\ 
\le \mathbb{E}\left\| \mathbf{W}^t\mathcal{M}({\bm{\omega}}^t+\epsilon(\bm{p}^t+\epsilon \bm{g}^{t+1}),\bm{\omega}^t,\aleph^{t+1})-\frac{1}{m}(\bm{I}\otimes\bm{1} \bm{1}^T)\mathcal{M}({\bm{ \omega}}^t+\epsilon(\bm{p}^t+\epsilon {\bm{ g}}^{t+1}),{\bm{\omega}}^t,\aleph^{t+1})\right\|^2 \\
\quad + 2\mathbb{E}\left\|\left(\mathbf{W}^t-\frac{1}{m}(\bm{I}\otimes\bm{1} \bm{1}^T)\right) \left(\bar{\bm{g}}^t-\bm{g}^t\right) \right\|^2\\ \quad+2\mathbb{E}\left\|\left(\mathbf{W}^t-\frac{1}{m}(\bm{I}\otimes\bm{1} \bm{1}^T)\right) \left[G( \bm{\omega}^t)-G(\bar {\bm{\omega}}^t)-G( \bm{\omega}^{t-1})+G(\bar{\bm{\omega}}^{t-1})\right]\right\|^2 \\ 
\quad + 2\mathbb{E}\left\|\left(\mathbf{W}^t-\frac{1}{m}(\bm{I}\otimes\bm{1} \bm{1}^T)\right) \left(\bar \aleph^t-\aleph^t\right) \right\|^2\\ \quad +2\mathbb{E}\left\|\left(\mathbf{W}^t-\frac{1}{m}(\bm{I}\otimes\bm{1} \bm{1}^T)\right) \left[H(\bm{\omega}^t)-H(\bar{\bm{\omega}}^t)-H(\bm{\omega}^{t-1})+H(\bar{\bm{\omega}}^{t-1})\right]\right\|^2 \\ \le 2(1+\epsilon^4)
\mathbb{E}\left\|\left(\mathbf{W}^t-\frac{1}{m}(\bm{I}\otimes\bm{1} \bm{1}^T)\right) \left(\bar{\bm{g}}^t-\bm{g}^t\right)\right\|^2+\epsilon^2 \mathbb{E}\left\|\left(\mathbf{W}^t-\frac{1}{m}(\bm{I}\otimes\bm{1} \bm{1}^T)\right) \bm{p}^t\right\|^2 \\
\quad +2(1+L_2+L_3)\mathbb{E}\left\|\left(\mathbf{W}^t-\frac{1}{m}(\bm{I}\otimes\bm{1} \bm{1}^T)\right) \left(\bar{\bm{\omega}}^t-\bm{\omega}^t\right)\right\|^2\\ \quad +2(L_2+L_3)\mathbb{E}\left\|\left(\mathbf{W}^t-\frac{1}{m}(\bm{I}\otimes\bm{1} \bm{1}^T)\right) \left(\bar{\bm{\omega}}^{t-1}-\bm{\omega}^{t-1}\right)\right\|^2 \\
\quad + 2\mathbb{E}\left\|\left(\mathbf{W}^t-\frac{1}{m}(\bm{I}\otimes\bm{1} \bm{1}^T)\right) \left(\bar \aleph^t-\aleph^t\right) \right\|^2 \\ 
\le 2(1+\epsilon^4)
\beta^{2t} \mathbb{E}\left\|\bar{\bm{g}}^t-\bm{g}^t\right\|^2+\epsilon^2 \beta^{2t}\mathbb{E}\left\| \bm{p}^t\right\|^2 \\
\quad +2(1+L_2+L_3)\beta^{2t}\mathbb{E}\left\|\bar{\bm{\omega}}^t-\bm{\omega}^t\right\|^2+2(L_2+L_3)\beta^{2t}\mathbb{E}\left\| \left(\bar{\bm{\omega}}^{t-1}-\bm{\omega}^{t-1}\right)\right\|^2 \\
\quad + 2\beta^{2t}\mathbb{E}\left\|\bar \aleph^t-\aleph^t\right\|^2
\end{array}
\end{equation}
where we have used that $\mathbf{W}^t \bar{\bm{g}}^t = \frac{1}{m}\left(\bm{I}\otimes\bm{1} \bm{1}^T\right) \bm{g}^t= \frac{1}{m}\left(\bm{I}\otimes\bm{1} \bm{1}^T\right) \bar{\bm{g}}^t$, etc. throughout and, e.g.,\cite[Lemma 6]{di2016next} for the fact that $\left\|\left(\mathbf{W}^t-\frac{1}{m}(\bm{I}\otimes\bm{1} \bm{1}^T)\right)\right\|\le \beta^t$.
\end{proof}


\subsection{Coupling Averaged to Approximate HMC}
Finally we derive the most involved expression, given by the Theorem, restated,
\begin{theorem}\label{th:approxavgb}
The $L2$ expected error accumulates as,
\begin{equation}
    \mathbb{E}\|\tilde{\bm{\omega}}^t-\bar{\bm{\omega}}^t\|+
    \mathbb{E}\|\tilde{\bm{g}}^t-\bar{\bm{g}}^t\|+\mathbb{E}\|\tilde{\aleph}^t-\bar{\aleph}^t\| \le \epsilon^3 \tilde{C}^{2t} B
\end{equation}
where $\tilde C>1$ and depends on $L_2$ and $\epsilon$, while $B$ depends on $L_2$, $\epsilon$, $\beta$ and $\|\mathbf{M}\|$.
\end{theorem}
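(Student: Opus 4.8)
The plan is to run a single coupling recursion in the combined expected norm
$D_t := \mathbb{E}\|\tilde{\bm{\omega}}^t-\bar{\bm{\omega}}^t\| + \mathbb{E}\|\tilde{\bm{g}}^t-\bar{\bm{g}}^t\| + \mathbb{E}\|\tilde{\aleph}^t-\bar{\aleph}^t\|$, exactly parallel to the argument for Theorem~\ref{th:consensus}, but now treating the consensus error bounded there as an exogenous input of order $\epsilon$. The chains~\eqref{eq:hmcvecstackavg} and~\eqref{eq:hmcvecstackavgpure} share the same $\bm{p}^t$ and $u^t$ and differ only in where the stacks $G,H$ are evaluated: at $\bm{\omega}^t$ in the averaged-decentralized chain versus at $\tilde{\bm{\omega}}^t$ in the pure chain. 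I would first subtract the two recursions block-by-block and push the averaging operator $\tfrac1m(\bm{I}\otimes\bm{1}\bm{1}^T)$, which has operator norm $1$ and acts as the identity on the consensus subspace, through each difference.

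For the tracking blocks I would exploit the telescoping identity of gradient tracking: with the standard initialization, $\bar{\bm{g}}^{t}=\tfrac1m(\bm{I}\otimes\bm{1}\bm{1}^T)G(\bm{\omega}^{t-1})$ and, by~\eqref{eq:centralgaleph}, $\tilde{\bm{g}}^{t}=\tfrac1m(\bm{I}\otimes\bm{1}\bm{1}^T)G(\tilde{\bm{\omega}}^{t-1})$, and likewise for $\aleph$. Consequently the tracking discrepancies carry no memory: by $L_2$-Lipschitzness of the gradient and $L_2,L_3$ control of the Hessian-vector stack, each of $\mathbb{E}\|\tilde{\bm{g}}^{t}-\bar{\bm{g}}^{t}\|$ and $\mathbb{E}\|\tilde{\aleph}^{t}-\bar{\aleph}^{t}\|$ is bounded by a constant times $\mathbb{E}\|\bm{\omega}^{t-1}-\tilde{\bm{\omega}}^{t-1}\|$, which I split as $\mathbb{E}\|\bm{\omega}^{t-1}-\bar{\bm{\omega}}^{t-1}\|+\mathbb{E}\|\bar{\bm{\omega}}^{t-1}-\tilde{\bm{\omega}}^{t-1}\|$; the first summand is the Theorem~\ref{th:consensus} consensus error of order $\epsilon$, the second is the $\bm{\omega}$-part of $D_{t-1}$.

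The $\bm{\omega}$-block is where the real work lies. Subtracting the two $\mathcal{M}$-applications, the averaging operator annihilates the consensus component of $\bm{\omega}^t-\tilde{\bm{\omega}}^t$, so on the event that both chains make the same coordinatewise accept/reject decision the increment is controlled by $\mathbb{E}\|\bar{\bm{\omega}}^t-\tilde{\bm{\omega}}^t\|+\epsilon^2\,\mathbb{E}\|\bm{g}^{t+1}-\tilde{\bm{g}}^{t+1}\|$; that is, the tracking discrepancy reaches the parameter only after the $\epsilon^2$ leapfrog factor, which is the mechanism converting the $O(\epsilon)$ consensus/tracking errors into an $O(\epsilon^3)$ source. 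The remaining contribution comes from coordinates where the chains disagree on acceptance: since $u^t$ is shared and $\rho=\min\{0,-\tfrac12\epsilon^2(\aleph+\|\bm{g}\|^2)\}$, a mismatch on coordinate $i$ forces $\log u_i$ between the two acceptance log-ratios, an event of probability at most their gap $\tfrac12\epsilon^2\big(|\aleph_i-\tilde{\aleph}_i|+\|\bm{g}_i\|\,\|\bm{g}_i-\tilde{\bm{g}}_i\|\big)$ (using that $x\mapsto e^{x}$ is $1$-Lipschitz on $x\le 0$ and $|\min(0,a)-\min(0,b)|\le|a-b|$), while the jump incurred is a proposal step of size $O(\epsilon\|\bm{p}\|)$; the product is again $O(\epsilon^3)$ times discrepancies already being tracked.

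Collecting these estimates yields a scalar recursion $D_{t+1}\le(1+c_1\epsilon^2)\,D_t+c_2\epsilon^3$ with $c_1,c_2$ depending on $L_2,L_3,\beta,\|\mathbf{M}\|$; setting $\tilde{C}^2:=1+c_1\epsilon^2>1$ and summing the geometric series from $D_0=0$ gives $D_t\le \epsilon^3\,\tilde{C}^{2t}B$, the claimed bound, with $B$ absorbing the constants and the consensus-error prefactors. The main obstacle is precisely the third paragraph: because $\mathcal{M}$ is discontinuous I cannot Lipschitz-bound the difference of the two $\mathcal{M}$-outputs directly, so I must split on the coordinatewise accept/reject events, bound the mismatch probability through the shared $u^t$ and the $\epsilon^2$-scale of the acceptance log-ratio, and bound the jump magnitude separately — then verify carefully that \emph{every} channel by which the order-$\epsilon$ consensus and tracking errors reach the parameter update carries at least two extra powers of $\epsilon$, so that the source is genuinely $O(\epsilon^3)$ rather than $O(\epsilon)$.
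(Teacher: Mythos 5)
Your overall strategy is the same as the paper's: couple the chains \eqref{eq:hmcvecstackavg} and \eqref{eq:hmcvecstackavgpure} through the shared $\bm{p}^t,u^t$, bound the tracking blocks by Lipschitz continuity with the Theorem~\ref{th:consensus} consensus error entering as an exogenous $O(\epsilon)$ source, and handle the $\bm{\omega}$-block by splitting on the coordinatewise accept/reject mismatch event, bounding its probability by the gap between the two log-acceptance ratios and multiplying by the $O(\epsilon\|\bm{p}\|)$ jump. Your use of the average-preservation identity $\bar{\bm{g}}^{t}=\tfrac1m(\bm{I}\otimes\bm{1}\bm{1}^T)G(\bm{\omega}^{t-1})$ to make the tracking discrepancies memoryless is a genuine (and clean) simplification over the paper, which instead carries recursions \eqref{eq:avgapproxg}--\eqref{eq:avgapproxal} for $\mathbb{E}\|\tilde{\bm{g}}^t-\bar{\bm{g}}^t\|$ and $\mathbb{E}\|\tilde{\aleph}^t-\bar{\aleph}^t\|$ inside a $3\times 3$ matrix recursion.

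The concrete gap is in your third paragraph, and it is exactly the point where the paper's proof becomes heavier than your sketch. The mismatch contribution to $\mathbb{E}\|\tilde{\bm{\omega}}^{t+1}-\bar{\bm{\omega}}^{t+1}\|$ has the form
\begin{equation*}
\mathbb{E}\Bigl[\tfrac12\epsilon^2\bigl(|\aleph_i-\tilde{\aleph}_i|+\cdots\bigr)\cdot\epsilon\bigl(\|\bm{p}^t\|+\epsilon\|\tilde{\bm{g}}^{t+1}\|+\cdots\bigr)\Bigr],
\end{equation*}
i.e.\ an expectation of a \emph{product} of two random quantities that are not independent: $\aleph^{t+1}$ contains $(\bm{p}^t)^T\nabla^2 U(\bm{p}^t)$, so both factors depend on $\bm{p}^t$ and on the current discrepancies. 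You cannot replace this by a product of first moments, which is what your scalar recursion $D_{t+1}\le(1+c_1\epsilon^2)D_t+c_2\epsilon^3$ implicitly does. The paper resolves this with Young's inequality, which is why its recursion \eqref{eq:matrecursion} must track the \emph{second} moments $\mathbb{E}\|\cdot\|^2$ alongside the first moments (the blocks $\mathbf{A}_2,\mathbf{A}_4$) and must invoke the $L^2$ form \eqref{eq:conserrorboundaa} of the consensus bound, not just the $L^1$ form. Your recursion needs the same augmentation (or a Cauchy--Schwarz step, which again drags in second moments) before it closes. A secondary remark: your claim that the inhomogeneous source is genuinely $O(\epsilon^3)$ is actually \emph{stronger} than what the paper's own display delivers --- after the Young splitting, the paper's source vector $\mathbf{B}$ contains the terms $8\epsilon\|\mathbf{M}\|^2+8\epsilon^2 U$, which are only $O(\epsilon)$ --- so if you do carry out the product-form bookkeeping with the correlation handled correctly, you would obtain a cleaner justification of the $\epsilon^3$ prefactor than the paper's own argument provides; but as written the step is not yet proved.
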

\begin{proof}

We have,
\begin{equation}\label{eq:avgapproxg}
\begin{array}{l}
\mathbb{E}\left\|\tilde{\bm{g}}^{t+1}-\bar{\bm{g}}^{t+1}\right\| \le \\ \mathbb{E}\left\|\frac{1}{m}(\bm{I}\otimes\bm{1} \bm{1}^T)\left[\tilde{\bm{g}}^{t}-\bar{\bm{g}}^t+G(\tilde{\bm{\omega}}^t)-G(\bar{\bm{\omega}}^t)+G(\bar{\bm{\omega}}^t)-G({\bm{\omega}}^t)+G(\tilde{\bm{\omega}}^{t-1})-G(\bar{\bm{\omega}}^{t-1})+G(\bar{\bm{\omega}}^{t-1})-G({\bm{\omega}}^{t-1})\right]\right\| \\ \le
\mathbb{E}\left\|\tilde{\bm{g}}^{t}-\bar{\bm{g}}^t\right\|+
L_2 \mathbb{E}\left\|\tilde{\bm{\omega}}^{t}-\bar{\bm{\omega}}^t\right\|+L_2\mathbb{E} \left\|{\bm{\omega}}^{t}-\bar{\bm{\omega}}^t\right\| +
L_2 \mathbb{E}\left\|\tilde{\bm{\omega}}^{t-1}-\bar{\bm{\omega}}^{t-1}\right\|+L_2 \mathbb{E}\left\|{\bm{\omega}}^{t-1}-\bar{\bm{\omega}}^{t-1}\right\| 
\end{array}
\end{equation}
By the same argument we have,
\begin{equation}\label{eq:avgapproxal}
\begin{array}{l}
\mathbb{E}\left\|\tilde{{\aleph}}^{t+1}-\bar{{\aleph}}^{t+1}\right\| \le \mathbb{E}\left\|\tilde{{\aleph}}^{t}-\bar{{\aleph}}^t\right\|+
L_2\mathbb{E} \left\|\tilde{\bm{\omega}}^{t}-\bar{\bm{\omega}}^t\right\|+L_2 \mathbb{E}\left\|{\bm{\omega}}^{t}-\bar{\bm{\omega}}^t\right\| \\ \qquad\qquad\qquad+
L_2\mathbb{E} \left\|\tilde{\bm{\omega}}^{t-1}-\bar{\bm{\omega}}^{t-1}\right\|+L_2 \mathbb{E} \left\|{\bm{\omega}}^{t-1}-\bar{\bm{\omega}}^{t-1}\right\| 
\end{array}
\end{equation}

For the differences in the parameters, we now derive,
\begin{equation}\label{eq:avgapproxomega}
\begin{array}{l}
\mathbb{E}\left\|\tilde{\bm{\omega}}^{t+1}-\bar{\bm{\omega}}^{t+1}\right\| \le \\ 
\mathbb{E}\left\|\frac{1}{m}(\bm{I}\otimes\bm{1} \bm{1}^T)\left[
\mathcal{M}(\bm{\omega}^t+\epsilon(\bm{p}^t+\epsilon \bm{g}^{t+1}),\bm{\omega}^t,\aleph^{t+1},u^t) - \mathcal{M}(\tilde{\bm{\omega}}^t+\epsilon(\bm{p}^t+\epsilon \tilde{\bm{g}}^{t+1}),\tilde{\bm{\omega}}^t,\tilde\aleph^{t+1},u^t)\right]\right\| \\ \le 
\mathbb{E}\left\|\frac{1}{m}(\bm{I}\otimes\bm{1} \bm{1}^T)(\bm{\omega}^t-\tilde{\bm{\omega}}^t)\right\|
+ \epsilon^2\mathbb{E}\left\|\frac{1}{m}(\bm{I}\otimes\bm{1} \bm{1}^T)(\bm{g}^{t+1}-\tilde{\bm{g}}^{t+1})\right\|
\\ \quad +\mathbb{P}\left[(\mathcal{M}(\bm{\omega}^t+\epsilon(\bm{p}^t+\epsilon \bm{g}^{t+1}),\bm{\omega}^t,\aleph^{t+1},u^t) = \bm{\omega}^t+\epsilon(\bm{p}^t+\epsilon \bm{g}^{t+1})) \right. \\ \qquad\qquad\left.\cap (\mathcal{M}(\tilde{\bm{\omega}}^t+\epsilon(\bm{p}^t+\epsilon \tilde{\bm{g}}^{t+1}),\tilde{\bm{\omega}}^t,\tilde\aleph^{t+1},u^t) = \tilde{\bm{\omega}}^{t})\right] \\\qquad\qquad\qquad\times
\mathbb{E}\left\|\epsilon(\bm{p}^t+\epsilon (\bm{g}^{t+1}-\bar{\bm{g}}^{t+1}+\bar{\bm{g}}^{t+1}-\tilde{\bm{g}}^{t+1}+\tilde{\bm{g}}^{t+1}))\right\|
\\
\qquad +\mathbb{P}\left[(\mathcal{M}(\bm{\omega}^t+\epsilon(\bm{p}^t+\epsilon \bm{g}^{t+1}),\bm{\omega}^t,\aleph^{t+1},u^t) = \bm{\omega}^t) \right. \\ \qquad\qquad\left.\cap (\mathcal{M}(\tilde{\bm{\omega}}^t+\epsilon(\bm{p}^t+\epsilon \tilde{\bm{g}}^{t+1}),\tilde{\bm{\omega}}^t,\tilde\aleph^{t+1},u^t) = \tilde{\bm{\omega}}^{t}+\epsilon(\bm{p}^t+\epsilon \tilde{\bm{g}}^{t+1})\right] \\\qquad\qquad\qquad\times
\mathbb{E}\left\|\epsilon(\bm{p}^t+\epsilon \tilde{\bm{g}}^{t+1})\right\|
\\ \le
\mathbb{E}\left\|\bar{\bm{\omega}}^t-\tilde{\bm{\omega}}^t\right\|
+ \epsilon^2\mathbb{E}\left\|\bar{\bm{g}}^{t+1}-\tilde{\bm{g}}^{t+1}\right\|\\ \quad +\epsilon\mathbb{E}\left[\left\|\tilde\aleph^{t+1}-\bar\aleph^{t+1}+\bar{\aleph}^{t+1}-\aleph^{t+1}\right\|
\left[\|\bm{p}^t\|+\epsilon\|\tilde{\bm{g}}^{t+1}\|+\epsilon\|\tilde{\bm{g}}^{t+1}-\bar{\bm{g}}^{t+1}\|+\epsilon \|\bar{\bm{g}}^{t+1}-\bm{g}^{t+1}\|
\right]\right] \\
\le
\mathbb{E}\left\|\bar{\bm{\omega}}^t-\tilde{\bm{\omega}}^t\right\| \\ \qquad 
+ \epsilon^2\left[\mathbb{E}\left\|\tilde{\bm{g}}^{t}-\bar{\bm{g}}^t\right\|+
L_2 \mathbb{E}\left\|\tilde{\bm{\omega}}^{t}-\bar{\bm{\omega}}^t\right\|+L_2\mathbb{E} \left\|{\bm{\omega}}^{t}-\bar{\bm{\omega}}^t\right\| +
L_2 \mathbb{E}\left\|\tilde{\bm{\omega}}^{t-1}-\bar{\bm{\omega}}^{t-1}\right\|+L_2 \mathbb{E}\left\|{\bm{\omega}}^{t-1}-\bar{\bm{\omega}}^{t-1}\right\| 
\right]\\ \quad +\epsilon\mathbb{E}\left[\left[\left\|\tilde{{\aleph}}^{t}-\bar{{\aleph}}^t\right\|+
L_3 \left\|\tilde{\bm{\omega}}^{t}-\bar{\bm{\omega}}^t\right\|+L_3 \left\|{\bm{\omega}}^{t}-\bar{\bm{\omega}}^t\right\|\right.\right. \\ \qquad\qquad\qquad+\left.\left.
L_3\left\|\tilde{\bm{\omega}}^{t-1}-\bar{\bm{\omega}}^{t-1}\right\|+L_3  \left\|{\bm{\omega}}^{t-1}-\bar{\bm{\omega}}^{t-1}\right\| 
+\left\|\bar\aleph^{t+1}-\aleph^{t+1}\right\|
\right]\times\right. \\ \quad \left.
\left[\|\bm{p}^t\|+\epsilon\|\tilde{\bm{g}}^{t+1}\|+\epsilon\left\|\tilde{\bm{g}}^{t}-\bar{\bm{g}}^t\right\|+
\epsilon L_2\left\|\tilde{\bm{\omega}}^{t}-\bar{\bm{\omega}}^t\right\|+\epsilon L_2 \left\|{\bm{\omega}}^{t}-\bar{\bm{\omega}}^t\right\| +
\epsilon L_2 \left\|\tilde{\bm{\omega}}^{t-1}-\bar{\bm{\omega}}^{t-1}\right\|\right.\right. \\ \qquad\qquad \left.\left.+\epsilon L_2 \left\|{\bm{\omega}}^{t-1}-\bar{\bm{\omega}}^{t-1}\right\|+\epsilon \|\bar{\bm{g}}^{t+1}-\bm{g}^{t+1}\|
\right]\right]  \\ \le
\mathbb{E}\left\|\bar{\bm{\omega}}^t-\tilde{\bm{\omega}}^t\right\| \\ \qquad 
+ \epsilon^2\left[\mathbb{E}\left\|\tilde{\bm{g}}^{t}-\bar{\bm{g}}^t\right\|+
L_2 \mathbb{E}\left\|\tilde{\bm{\omega}}^{t}-\bar{\bm{\omega}}^t\right\|+L_2\mathbb{E} \left\|{\bm{\omega}}^{t}-\bar{\bm{\omega}}^t\right\| +
L_2 \mathbb{E}\left\|\tilde{\bm{\omega}}^{t-1}-\bar{\bm{\omega}}^{t-1}\right\|+L_2 \mathbb{E}\left\|{\bm{\omega}}^{t-1}-\bar{\bm{\omega}}^{t-1}\right\| 
\right] \\ \qquad + 8\epsilon\mathbb{E}\left\|\tilde{\aleph}^{t}-\bar{\aleph}^t\right\|^2+8 \epsilon L_3 \mathbb{E}\left\|\tilde{\bm{\omega}}^{t}-\bar{\bm{\omega}}^t\right\|^2+8\epsilon L_3 \mathbb{E}\left\|{\bm{\omega}}^{t}-\bar{\bm{\omega}}^t\right\|^2+8\epsilon L_3 \mathbb{E}\left\|\tilde{\bm{\omega}}^{t-1}-\bar{\bm{\omega}}^{t-1}\right\|^2+8\epsilon L_3 \mathbb{E}\left\|{\bm{\omega}}^{t-1}-\bar{\bm{\omega}}^{t-1}\right\|^2\\ \qquad
+8\epsilon L_2 \mathbb{E}\left\|{\aleph}^{t}-\bar{\aleph}^t\right\|^2+8\epsilon\mathbb{E}\left\|\bm{p}^t\right\|^2+8\epsilon^2 \mathbb{E}\left\|\tilde{\bm{g}}^{t+1}\right\|^2+8\epsilon^2 \mathbb{E}\left\|\tilde{\bm{g}}^t-\bar{\bm{g}}^t\right\|^2+8\epsilon^2 L_2  \mathbb{E}\left\|\tilde{\bm{\omega}}^t-\bar{\bm{\omega}}^t\right\|^2 \\ \qquad + 
8\epsilon^2 L_2  \mathbb{E}\left\|{\bm{\omega}}^t-\bar{\bm{\omega}}^t\right\|^2 + 
8\epsilon^2 L_2  \mathbb{E}\left\|\tilde{\bm{\omega}}^{t-1}-\bar{\bm{\omega}}^{t-1}\right\|^2  + 
8\epsilon^2 L_2  \mathbb{E}\left\|{\bm{\omega}}^{t-1}-\bar{\bm{\omega}}^{t-1}\right\|^2  + 
8\epsilon^2 L_2  \mathbb{E}\left\|{\bm{g}}^{t+1}-\bar{\bm{g}}^{t+1}\right\|^2  \\ 
\le \left(1+\epsilon^2 L_2\right)\mathbb{E}\left\|\bar{\bm{\omega}}^t-\tilde{\bm{\omega}}^t\right\|+8  \epsilon\left(L_3+\epsilon L_2\right)\mathbb{E}\left\|\bar{\bm{\omega}}^t-\tilde{\bm{\omega}}^t\right\|^2+\epsilon^2 
\mathbb{E}\left\|\tilde{\bm{g}}^{t}-\bar{\bm{g}}^t\right\|
+8\epsilon^2 
\mathbb{E}\left\|\tilde{\bm{g}}^{t}-\bar{\bm{g}}^t\right\|^2 \\ 
\quad+8\epsilon \mathbb{E}\left\|\tilde{\aleph}^t-\bar{\aleph}^t\right\|^2 +8\epsilon  (L_3+\epsilon L_2)\mathbb{E}\left\|\tilde{\bm{\omega}}^{t-1}-\bar{\bm{\omega}}^{t-1}\right\|^2 \\ 
\quad + \epsilon^2 L_2 \mathbb{E}\left\|\bm{\omega}^t-\bar{\bm{\omega}}^t\right\|
+ \epsilon^2 L_2 \mathbb{E}\left\|\bm{\omega}^{t-1}-\bar{\bm{\omega}}^{t-1}\right\|
+ 8\epsilon (L_3+\epsilon L_2) \mathbb{E}\left\|\bm{\omega}^t-\bar{\bm{\omega}}^t\right\|^2
+ 8\epsilon (L_3+\epsilon L_2) \mathbb{E}\left\|\bm{\omega}^{t-1}-\bar{\bm{\omega}}^{t-1}\right\|^2 \\ 
\quad + 8\epsilon L_2 \mathbb{E}\|\aleph^t-\bar{\aleph}^t\|^2
+ 8\epsilon^2 L_2 \mathbb{E}\left\|\bm{g}^{t+1}-\bar{\bm{g}}^{t+1}\right\|^2
+8\epsilon \mathbb{E}\|\bm{p}^t\|^2+8\epsilon^2\mathbb{E}\left\|\tilde{\bm{g}}^{t+1}\right\|^2
\end{array}
\end{equation}
Putting these together, we get,
\begin{equation}\label{eq:matrecursion}
\begin{array}{l}
\begin{pmatrix}
\mathbb{E}\left\|\tilde{\bm{\omega}}^{t+1}-\bar{\bm{\omega}}^{t+1}\right\| \\
\mathbb{E}\left\|\tilde{\bm{g}}^{t+1}-\bar{\bm{g}}^{t+1}\right\| \\
\mathbb{E}\left\|\tilde{{\aleph}}^{t+1}-\bar{{\aleph}}^{t+1}\right\|
\end{pmatrix}
\le 
\mathbf{A_1} \begin{pmatrix}
\mathbb{E}\left\|\tilde{\bm{\omega}}^{t}-\bar{\bm{\omega}}^{t}\right\| \\
\mathbb{E}\left\|\tilde{\bm{g}}^{t}-\bar{\bm{g}}^{t}\right\| \\
\mathbb{E}\left\|\tilde{{\aleph}}^{t}-\bar{{\aleph}}^{t}\right\|
\end{pmatrix}+\mathbf{A_2} \begin{pmatrix}
\mathbb{E}\left\|\tilde{\bm{\omega}}^{t}-\bar{\bm{\omega}}^{t}\right\|^2 \\
\mathbb{E}\left\|\tilde{\bm{g}}^{t}-\bar{\bm{g}}^{t}\right\|^2 \\
\mathbb{E}\left\|\tilde{{\aleph}}^{t}-\bar{{\aleph}}^{t}\right\|^2
\end{pmatrix} \\ 
\qquad + \mathbf{A_3} \begin{pmatrix}
\mathbb{E}\left\|\tilde{\bm{\omega}}^{t-1}-\bar{\bm{\omega}}^{t-1}\right\| \\
\mathbb{E}\left\|\tilde{\bm{g}}^{t-1}-\bar{\bm{g}}^{t-1}\right\| \\
\mathbb{E}\left\|\tilde{{\aleph}}^{t-1}-\bar{{\aleph}}^{t-1}\right\|
\end{pmatrix}+\mathbf{A_4} \begin{pmatrix}
\mathbb{E}\left\|\tilde{\bm{\omega}}^{t-1}-\bar{\bm{\omega}}^{t-1}\right\|^2 \\
\mathbb{E}\left\|\tilde{\bm{g}}^{t-1}-\bar{\bm{g}}^{t-1}\right\|^2 \\
\mathbb{E}\left\|\tilde{{\aleph}}^{t-1}-\bar{{\aleph}}^{t-1}\right\|^2
\end{pmatrix}
+\mathbf{B} \\ 
\mathbf{A}_1 = \begin{pmatrix} (1+\epsilon^2 L_2) & \epsilon^2 & 8\epsilon \\
L_2 & 1 & 0 \\ L_2 & 0 & 1 \end{pmatrix} \\
\mathbf{A}_2 = \begin{pmatrix}
8\epsilon (L_3+\epsilon L_2) & 8\epsilon^2 &  8\epsilon \\ 0 & 0 & 0 \\ 0 & 0 & 0 
\end{pmatrix} \\
\mathbf{A}_3 = \begin{pmatrix}
\epsilon^2 L_2 & 0 &  8\epsilon \\ L_2 & 0 & 0 \\ L_2 & 0 & 0 
\end{pmatrix} \\
\mathbf{A}_4 = \begin{pmatrix}
8\epsilon (L_3+\epsilon L_2) & 0 &  0 \\ 0 & 0 & 0 \\ 0 & 0 & 0 
\end{pmatrix} \\
\mathbf{B} = 
\begin{pmatrix}
\frac{2\epsilon^3  L_2 \|\mathbf{M}\| }{1-\beta}+\frac{16 \epsilon^3L_2 (1+\epsilon)\|\mathbf{M}\|^2 }{1-\beta^2} +8\epsilon \|\mathbf{M}\|^2+8\epsilon^2 U\\ 
\frac{4\epsilon^3  L_2 \|\mathbf{M}\| }{1-\beta} \\
\frac{4\epsilon^3  L_2 \|\mathbf{M}\| }{1-\beta}
\end{pmatrix}
\end{array}
\end{equation}
We can obtain the rough inductive upper bound stated in the Theorem by,
\[
\begin{array}{l}
\begin{pmatrix}
\mathbb{E}\left\|\tilde{\bm{\omega}}^{t}-\bar{\bm{\omega}}^{t}\right\| \\
\mathbb{E}\left\|\tilde{\bm{g}}^{t}-\bar{\bm{g}}^{t}\right\| \\
\mathbb{E}\left\|\tilde{{\aleph}}^{t}-\bar{{\aleph}}^{t}\right\|
\end{pmatrix}
\le \left\|\left(\mathbf{A}_1+\mathbf{A}_2+\mathbf{A}_3+\mathbf{A}_4\right)^{2t} \mathbf{B}+\left(\mathbf{A}_1+\mathbf{A}_2+\mathbf{A}_3+\mathbf{A}_4\right)^{2t} \mathbf{B}^{\cdot 2}\right\|  
\le \epsilon^3 \tilde{C}^{2t}  B,\\
\tilde{C} = 4L_2+1+\epsilon (8+16L_3)+18 \epsilon^2 L_2,\\
B = \|\mathbf{B}\|
\end{array}
\]
\end{proof}

\section{Euler Update}

\begin{algorithm}[h!]
\SetAlgoLined
\KwResult{$\bm{\omega}^*, \mathbf{p}^*$}
 input: $\bm{\omega}, \mathbf{p}, \epsilon, \mathbf{g}$\;
 $\mathbf{p}^*  = \mathbf{p} + \epsilon \mathbf{g}$\;
 $\bm{\omega}^*  = \bm{\omega} + \epsilon \mathbf{p}^*$\;
 \caption{``EulerUpdate'' (1st order Euler integrator)}
\end{algorithm}

\section{Experiment Hyperparameters}

\textbf{Linear Regression.} We set the doubly stochastic matrix, $\mathbf{W} = \frac{1}{N_a}\mathbf{1}_4$, where number of agents, $N_a = 4$ and $\mathbf{1}_4$ is a $4\times4$ matrix of ones. We run the experiment over 9 seeds for $T = 10^5$ iterations. Hardware: MacBook Pro, Processor: 2.6 GHz 6-Core Intel Core i7, Memory: 16 GB.
\begin{itemize}
    \item \textbf{Centralized HMC:} $\epsilon = 4\times10^{-4}$, $L = 1$, prior precision $= 1.0$.
    \item \textbf{Decentralized MALA:} $\epsilon = 4\times10^{-4}$, prior precision $= 1.0$. We switch off the MH step for the first $10^3$ steps to ensure that the Taylor approximation is only applied from a point closer to the target distribution.
    \item \textbf{Decentralized ULA:} $\epsilon = 3\times10^{-7}$. Following the same notation from \cite{parayil2020decentralized}: $\beta_0 = 0.48,\delta_1 = 0.01, \delta_2 = 0.55,  b_1 = 230, b_2 = 230$.
\end{itemize}

\textbf{Logistic Regression.}

\textbf{Partial Observation:}
 We set the doubly stochastic matrix, $\mathbf{W} = \frac{1}{N_a}\mathbf{1}_4$, where number of agents, $N_a = 4$ and $\mathbf{1}_4$ is a $4\times4$ matrix of ones. We run the experiment over 9 seeds for $T = 8\times10^3$ iterations. Hardware: GeForce RTX 2080 Ti. 
\begin{itemize}
    \item \textbf{Centralized HMC:} $\epsilon = 0.001$, $L = 1$, prior precision $= 100.0$.
    \item \textbf{Decentralized MALA:} $\epsilon = 5\times10^{-4}$, prior precision $= 100.0$. We switch off the MH step for the first $2\times 10^3$ steps to ensure that the Taylor approximation is only applied from a point closer to the target distribution.
    \item \textbf{Decentralized ULA:} $\epsilon = 1\times10^{-5}$. Following the same notation from \cite{parayil2020decentralized}: $\beta_0 = 0.48,\delta_1 = 0.01, \delta_2 = 0.55,  b_1 = 230, b_2 = 230$.
\end{itemize}

\textbf{Ring Network:}
 We set the doubly stochastic matrix, $\mathbf{W} = (\mathbf{I} + \mathbf{A})\frac{1}{N_a}$, where number of agents, $N_a = 5$ and $\mathbf{I}$ is the identity matrix, and $\mathbf{A}$ is the adjacency matrix for a ring shaped graph. We run the experiment over 9 seeds for $T = 1\times10^4$ iterations. Hardware: GeForce RTX 2080 Ti. 
\begin{itemize}
    \item \textbf{Centralized HMC:} $\epsilon = 0.001$, $L = 1$, prior precision $= 100.0$.
    \item \textbf{Decentralized MALA:} $\epsilon = 0.003$, prior precision $= 100.0$. We switch off the MH step for the first $1\times 10^3$ steps to ensure that the Taylor approximation is only applied from a point closer to the target distribution.
    \item \textbf{Decentralized ULA:} $\epsilon = 1\times10^{-4}$. Following the same notation from \cite{parayil2020decentralized}: $\beta_0 = 0.48,\delta_1 = 0.01, \delta_2 = 0.55,  b_1 = 230, b_2 = 230$.
\end{itemize}

\textbf{Bayesian neural network.}
We set the doubly stochastic matrix, $\mathbf{W} = \frac{1}{N_a}\mathbf{1}_2$, where number of agents, $N_a = 2$ and $\mathbf{1}_2$ is a $2\times2$ matrix of ones. We run the experiment for $T = 5\times10^5$ iterations.
\textbf{Decentralized MALA:} $\epsilon = 7\times10^{-5}$, prior precision $= 10.0$. We switch off the MH step for the first $2\times 10^3$ steps to ensure that the Taylor approximation is only applied from a point closer to the target distribution. Hardware: GeForce RTX 2080 Ti. 

\end{document}